\documentclass[10pt,twocolumn,letterpaper]{article}

\usepackage{cvpr}
\usepackage{float}
\usepackage{url}
\usepackage{enumitem}
\usepackage{amsmath,amsfonts,amssymb,amsthm}
\usepackage{algorithmic}
\usepackage{graphicx}
\usepackage{textcomp}
\usepackage{xcolor}
\usepackage{xurl}
\usepackage[utf8]{inputenc}
\usepackage{booktabs}
\usepackage{tabularx}
\usepackage[accsupp]{axessibility}

\usepackage{array}
\newtheorem{theorem}{Theorem}

\newtheorem{corollary}[theorem]{Corollary}

\newtheorem{definition}{Definition}

\newcommand{\Prv}[1]{\operatorname{Pr}\!\left[#1\right]}
\newcommand{\RR}{\mathcal{M}_{\mathrm{RR}}}
\newcommand{\Adv}{\mathsf{Adv}}

\definecolor{cvprblue}{rgb}{0.21,0.49,0.74}
\usepackage[pagebackref,breaklinks,colorlinks,allcolors=cvprblue]{hyperref}

\def\paperID{6142} 
\def\confName{CVPR}
\def\confYear{2026}

\title{LDP-Slicing: Local Differential Privacy for Images via Randomized Bit-Plane Slicing}

\author{Yuanming Cao \quad Chengqi Li \quad Wenbo He \\
McMaster University\\
Hamilton, Ontario, Canada\\
 {\tt\small \{caoy15, lic222, hew11\}@mcmaster.ca}
 }
\begin{document}
\maketitle
\begin{abstract}
Local Differential Privacy (LDP) is the gold standard trust model for privacy-preserving machine learning by guaranteeing privacy at the data source. However, its application to image data has long been considered impractical due to the high dimensionality of pixel space. Canonical LDP mechanisms are designed for low-dimensional data, resulting in severe utility degradation when applied to high-dimensional pixel spaces. This paper demonstrates that this utility loss is not inherent to LDP, but from its application to an inappropriate data representation. We introduce LDP-Slicing, a lightweight, training-free framework that resolves this domain mismatch. Our key insight is to decompose pixel values into a sequence of binary bit-planes. This transformation allows us to apply the LDP mechanism directly to the bit-level representation. To further strengthen privacy and preserve utility, we integrate a perceptual obfuscation module that mitigates human-perceivable leakage and an optimization-based privacy budget allocation strategy. This pipeline satisfies rigorous pixel-level $\varepsilon$-LDP while producing images that retain high utility for downstream tasks. Extensive experiments on face recognition and image classification demonstrate that LDP-Slicing outperforms existing DP/LDP baselines under comparable privacy budgets, with negligible computational overhead. 
\end{abstract}
    
\section{Introduction}
The architecture of modern vision systems is built on a fundamental, high-stakes trade: to unlock the power of services like biometric authentication or AI-assisted medical diagnostics, users must surrender their privacy-sensitive, identifiable image to a central server. Even with the best security practices, this centralization is vulnerable to internal breaches and unauthorized misuse. Recurring violations of data privacy regulations, such as the collection of facial templates without user consent \cite{patel2015_facebook_bipa}, confirm this systemic misuse. The urgent question for the field is how to preserve image privacy and enable powerful vision models in a zero-trust world.

\begin{figure*}[!t]
  \centering
  \includegraphics[width=\textwidth]{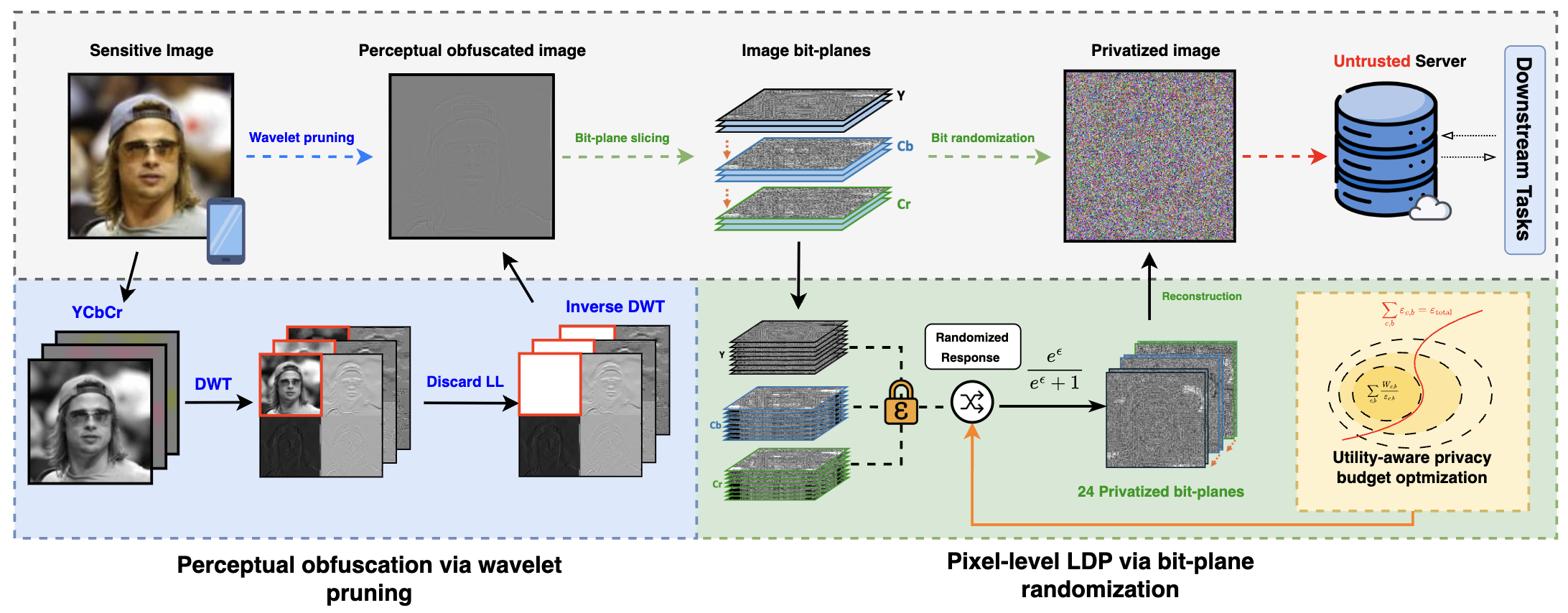}
  \caption{\textbf{The LDP-Slicing framework.} Our method consists of two primary stages: (1) Perceptual obfuscation: the input image is transformed into the frequency domain via DWT, where the low-frequency (LL) band is pruned to remove human-perceptible information. (2) Bit-plane randomization: The obfuscated image is decomposed into binary bit-planes. A utility-aware randomized response mechanism is applied to each bit and enforces a strict $\varepsilon$-Local Differential Privacy guarantee before the final image is reconstructed.}
  \label{fig:pipline}
\end{figure*}

Early approaches to image privacy relied mainly on simple visual obfuscation, such as blurring and pixelation \cite{35481,blur}. Although visually effective against human observers, these methods offer no formal privacy guarantees and have been shown to be reversible by modern deep learning attacks \cite{101145}. Meanwhile, cryptographic techniques \cite{pmlrv48giladbachrach16,infocom} provide mathematically strong security guarantees against unauthorized access, but cannot prevent inference attacks and their computational demands make them impractical for training large-scale vision models. 

Differential Privacy (DP) \cite{DworkMNS06,DworkMNS07} has emerged as the gold standard algorithm to provide rigorous, quantifiable privacy guarantees. In centralized DP, data is first collected by a central curator who injects noise into query results or trained models \cite{Abadi_2016}. This setting benefits from higher utility but is built on an unrealistic assumption of a trusted curator. If the curator is breached, the privacy of all raw data is irreversibly lost, violating the core principle of a zero-trust world.

To eliminate the reliance on a trusted curator, we turn to Local Differential Privacy (LDP) \cite{kasiviswanathan2010learnprivately}, which enforces privacy at the data source. However, LDP confronts an insurmountable barrier with high-dimensional data. A single 8-bit pixel can assume 256 distinct values. Applying the LDP mechanism like randomized response in this high-cardinality space injects overwhelming noise that destroys nearly all task-relevant information and leads to the well-known ``curse of dimensionality" \cite{duchi2014localprivacydataprocessing}.

As a result, works for provable vision privacy have generally followed two alternative paths: either operating within the centralized DP model \cite{zhu2020private, golatkar2022mixed}, or applying LDP to a lower-dimensional representation, such as latent embeddings \cite{li2021differentially}, feature descriptors \cite{pittaluga2023ldp}, or eigenvectors \cite{peep}.

We argue this utility loss is not a limitation of LDP, but a solvable problem of domain mismatch. Our central premise is that an image can be transformed into an LDP-friendly representation without collapsing its dimensionality. This reframes the catastrophic utility loss as a consequence of applying LDP to an improper data representation. To achieve this, we introduce LDP-Slicing, a training-free framework that makes LDP practical for standard images. The framework first applies a wavelet-based perceptual obfuscation to defend against human inspection, then maps each obfuscated image to an LDP-friendly binary bit-plane space and applies a utility-aware randomized response at the bit level. The resulting privatized images satisfy rigorous per-pixel $\varepsilon$-LDP and plug directly into standard recognition/classification models. Extensive experiments show that LDP-Slicing delivers strong privacy–utility trade-offs with negligible overhead.

Our main contributions are as follows:
\begin{itemize}
  \item We introduce LDP-Slicing, a practical framework that, for the first time, enables rigorous pixel-level $\varepsilon$-LDP for standard images, without relying on handcrafted features or learned representations. The privatized image is fully compatible with the standard vision pipelines without requiring architectural changes.
  \item We propose an optimization-driven budget allocation strategy that distributes privacy noise according to the structural and perceptual importance of each bit-plane and channel.  
  \item We provide a formal proof that LDP-Slicing satisfies rigorous, pixel-level $\varepsilon$-Local Differential Privacy, and empirically validate its resilience against an identity distinguishing attack.
  \item We demonstrate state-of-the-art privacy-utility tradeoffs on four face recognition and two image classification benchmarks. Our method is lightweight and efficient, making it well-suited for on-device deployment.
\end{itemize}

\section{Related work}
\subsection{Image obfuscation}
The most direct approach to image privacy is to obscure visually sensitive content. Traditional methods like blurring, pixelation, and noise injection \cite{pxil, blur}, while simple to implement, offer no formal guarantees and have been repeatedly shown to be vulnerable to reconstruction by modern machine learning attacks \cite{zhang2022deepimagedeblurringsurvey}. A more sophisticated variant, learnable instance encoding, exemplified by InstaHide \cite{huang2021}, mixes sensitive images with a large public dataset and randomly flips the sign of pixels to conceal visual information. However, like other purely obfuscation-based methods, it offers no formal privacy guarantee and indeed was decisively broken by Carlini et al. \cite{carlini2021}, who demonstrated near-perfect reconstruction attacks, and subsequently by fusion-denoising attacks \cite{Luo_2022}. The consistent failure of these methods reveals a core principle: visual obscurity is a fragile defense against increasingly powerful learning-based attacks.

A separate line of work attempts to achieve privacy preservation in the frequency domain. These methods transform images using the Discrete Cosine Transform (DCT) \cite{Chen1977AFC} and then perturb or discard specific frequency coefficients \cite{ppfr-fd, duetface}. While removing low-frequency components can degrade visual information and preserve utility for some tasks \cite{wang2020highfrequencycomponenthelps}, this frequency removal method is also reversible \cite{Li_2010}. In stark contrast to these heuristic approaches, LDP-Slicing provides a formal, mathematically provable guarantee of privacy that is resilient by design. 
\subsection{Differential Privacy for images}
 To achieve provable privacy guarantees, most research has adopted DP with a centralized trust model. In this setting, a trusted curator collects the dataset of sensitive images and applies a DP mechanism, by injecting calibrated noise into the gradients during model training (e.g., DP-SGD \cite{Abadi_2016}). This has been successfully used to train private generative models like DP-GANs \cite{xie2018differentiallyprivategenerativeadversarial} or, more recently, DP-Diffusion models \cite{298240}, which can synthesize private images. While LDP removes the need for a trusted curator, it faces the well-known challenge with high-dimensional images, often referred to as the curse of dimensionality. Consequently, prior work avoids applying LDP to raw pixels, instead applying LDP to lower dimensional intermediate representations, such as feature vectors \cite{peep, pittaluga2023ldp}. This leaves a critical gap in the literature, which our work directly addresses: a method that provides the source-level privacy guarantees directly on high-dimensional pixel data without collapsing it into a low-dimensional representation.
\section{Preliminaries}
\subsection{Threat model}\label{sec:thret}
 We consider a powerful adversary who controls the server and has full knowledge of our privacy mechanism, but cannot access raw user data. The adversary's goal is to perform either a reconstruction attack (recovering a perceptually similar image $I^*$ from a privatized image $\tilde{I}$) or an identity distinguishing attack (determining if a privatized image $\tilde{I}$ and a public image $I_{\text{pub}}$ belong to the same person). We adopt the formal definitions for the reconstruction attack from Carlini et al. \cite{carlini2021} and provide the full details in supplementary. 
 \begin{definition}[Identity distinguishing attack]
\label{def:identity-disthin}
Let $\mathcal{M} : \mathcal{X} \to \widetilde{\mathcal{X}}$ be a randomized local privacy mechanism over face images. Let $\mathcal{I}$ be the set of identities, and for every $i \in \mathcal{I}$ let $D_i^{\mathrm{priv}}$ and $D_i^{\mathrm{pub}}$ be the private and public image distributions of that identity. The identity distinguishing experiment samples an identity $i$, draws $x \sim D_i^{\mathrm{priv}}$, releases $\tilde{x} = \mathcal{M}(x)$, and then, with probability $1/2$, draws $y \sim D_i^{\mathrm{pub}}$ (the match case) and otherwise draws $j \neq i$ and $y \sim D_j^{\mathrm{pub}}$ (the non-match case). The bit $b = \mathbf{1}[i = j]$ is the ground truth. An adversary $\mathcal{A}$ outputs $b' = \mathcal{A}(\tilde{x}, y)$, and its advantage is
\[
  \Adv^{\mathrm{link}}_{\mathcal{M}}(\mathcal{A})
  = \bigl| \Pr[b' = b] - \tfrac{1}{2} \bigr| .
\]
The probability is over the sampling of identities/images, the randomness of $\mathcal{M}$, and the randomness of $\mathcal{A}$.
\end{definition}
\subsection{Local Differential Privacy}
 Our work is grounded in the formal guarantees of Local Differential Privacy:
\begin{definition}[$\varepsilon$-Local Differential Privacy \cite{kasiviswanathan2010learnprivately}]
\label{def:dp}
A randomized mechanism $\mathcal{M}\!:\!\mathcal{X}\!\to\!\mathcal{Y}$ is
\emph{$\varepsilon$-LDP} if for all $x,x'\!\in\!\mathcal{X}$ and all measurable subset
$S\subseteq\mathcal{Y}$,
\[
  \Prv{\mathcal{M}(x)\in S}\le
e^{\varepsilon}\,\Prv{\mathcal{M}(x')\in S}.\]
\end{definition}
Unlike centralized DP, which ensures privacy by guaranteeing indistinguishability between neighboring datasets, LDP does not require defining neighboring dataset and provides a stronger privacy guarantee by ensuring that for any two local inputs, the mechanism’s output distributions are nearly identical. The privacy budget $\varepsilon$ controls the privacy-utility trade-off (\eg, smaller values imply stronger privacy and greater utility loss).
A canonical mechanism for achieving LDP in binary data is the randomized response.
\begin{definition}[Randomized response~\cite{randomrespnse}]
\label{def:rr}
Let $v$ take values in the binary domain $v \in \{0,1\}$, the (Binary) Randomized
Response $M_{RR}$ mechanism that takes $v$ as input and outputs a perturbed
$v' \in \{0,1\}$ can be defined as:
\[
\Prv{\RR(v)=v'} \;=\;
\begin{cases}
p      & \text{if } v = v',\\[2pt]
1-p    & \text{if } v \ne v'.
\end{cases}
\]
\end{definition}
The input $v$ can be viewed as the true input, and the probability of a truthful response is denoted as $p$. Such a mechanism has two nice properties:
\begin{theorem}[Sequential composition {\cite{DworkMNS06}}]
Let $\mathcal{M}_1,\mathcal{M}_2$ be randomized algorithms that are
$\varepsilon_1$- and $\varepsilon_2$‑LDP, respectively.  
Then their release \(\mathcal{M}_{1,2}(x)\!=\!(\mathcal{M}_1(x),\mathcal{M}_2(x))\)
is $(\varepsilon_1+\varepsilon_2)$‑LDP.
\label{thrm:seqcom}
\end{theorem}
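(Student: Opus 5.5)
The plan is to bound the output distribution of the joint mechanism $\mathcal{M}_{1,2}$ directly, using that $\mathcal{M}_1$ and $\mathcal{M}_2$ use independent internal randomness. Fix $x,x'\in\mathcal{X}$. I will work with the product output space $\mathcal{Y}_1\times\mathcal{Y}_2$ and argue first for product ("rectangle") events $S=S_1\times S_2$, then extend to arbitrary measurable $S$.

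For a rectangle, independence gives
\[
\Prv{\mathcal{M}_{1,2}(x)\in S_1\times S_2}=\Prv{\mathcal{M}_1(x)\in S_1}\,\Prv{\mathcal{M}_2(x)\in S_2}.
\]
Applying Definition~\ref{def:dp} to each factor bounds the product by $e^{\varepsilon_1}e^{\varepsilon_2}$ times the corresponding product for $x'$. By independence again, that product equals $\Prv{\mathcal{M}_{1,2}(x')\in S_1\times S_2}$.

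To pass from rectangles to general $S$, I will first treat the discrete case, which covers the randomized response mechanisms used here. There every $S$ is a countable disjoint union of singletons $\{(y_1,y_2)\}$, each of which is a rectangle, so summing the pointwise inequality gives the claim. For general measurable spaces, I will instead define
\[
\nu(S)=e^{\varepsilon_1+\varepsilon_2}\Prv{\mathcal{M}_{1,2}(x')\in S}-\Prv{\mathcal{M}_{1,2}(x)\in S}.
\]
This $\nu$ is a finite signed measure that is nonnegative on rectangles, and hence nonnegative on finite disjoint unions of rectangles, which form an algebra.

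\textbf{Main obstacle.} Nonnegativity on the algebra does not pass to the generated $\sigma$-algebra by a naive $\pi$-$\lambda$ argument, because the class $\{S:\nu(S)\ge 0\}$ is not closed under complements. I would handle this in one of two ways. The first uses the Carathéodory/outer-measure approximation: every measurable $S$ can be approximated in the total variation of $|\nu|$ by sets in the algebra, so $\nu(S)\ge 0$ by continuity. The cleaner alternative is to pass to densities. Take a common dominating measure $\mu_i$ for the laws of $\mathcal{M}_i(x)$ and $\mathcal{M}_i(x')$. The $\varepsilon_i$-LDP condition is equivalent to $f_{i,x}\le e^{\varepsilon_i}f_{i,x'}$ holding $\mu_i$-a.e. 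Then the joint density with respect to $\mu_1\otimes\mu_2$ is the product of the marginal densities by independence, so it satisfies the pointwise bound with constant $e^{\varepsilon_1+\varepsilon_2}$. Integrating this bound over $S$ concludes the proof.
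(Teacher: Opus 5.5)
Your proof is correct. The paper does not prove this theorem itself; it is quoted from \cite{DworkMNS06}. Its only in-house version of the argument is Step~2 of the supplementary proof of Theorem~\ref{thm:dpslicing_pixel}, and that step is exactly your discrete case: factor the probability of a single output over the independently randomized coordinates, then bound each factor by $e^{\varepsilon_\ell}$. Your version of that step is the cleaner one, in three ways. First, you make independence of the internal coins an explicit hypothesis, which is genuinely needed: with shared coins, e.g.\ $\mathcal{M}_1(x)=x\oplus r$ and $\mathcal{M}_2(x)=r$ for a uniform bit $r$, two $0$-LDP mechanisms jointly reveal $x$. Second, you compare against the same output under the other input $x'$, rather than against the complementary event as the paper's display literally does. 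Third, you sum the pointwise bound over singletons to get the inequality for every set $S$, a step the paper leaves implicit. Your extension to general measurable output spaces goes beyond what the paper needs for randomized response, but both of your routes through it are sound. One simplification is available: the obstacle you flag only blocks Dynkin's $\pi$-$\lambda$ theorem. Since $\nu$ is a finite signed measure, it is continuous along monotone sequences of sets, so $\{S:\nu(S)\ge 0\}$ is a monotone class. It contains the algebra of finite disjoint unions of rectangles, so by the monotone class theorem it contains the whole product $\sigma$-algebra, and neither the approximation nor the density argument is needed.
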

\begin{theorem}[Closedness to postprocessing \cite{DworkMNS07}]
Let $\mathcal{M} : \mathbb{N}^{|\mathcal{X}|} \to R$ be a randomized algorithm that is $\varepsilon$-local differentially private. Let $f : R \to R'$ be an arbitrary randomized mapping. Then $f \circ \mathcal{M} : \mathcal{X} \to R'$ is $\varepsilon$-local differentially private.
\label{thrm:postpo}
\end{theorem}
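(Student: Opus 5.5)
The statement immediately above is Theorem~\ref{thrm:postpo}, closedness to postprocessing. I will prove it directly from Definition~\ref{def:dp}. The plan has two steps: first handle a deterministic measurable map $f$, then extend to randomized $f$ by writing it as a mixture of deterministic maps, or equivalently as a Markov kernel.

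\textbf{Deterministic $f$.} Fix inputs $x,x'$ and a measurable set $S'\subseteq R'$. Set $S=f^{-1}(S')\subseteq R$, which is measurable because $f$ is measurable. Then
\[
\Prv{f(\mathcal{M}(x))\in S'}=\Prv{\mathcal{M}(x)\in S}\le e^{\varepsilon}\Prv{\mathcal{M}(x')\in S}=e^{\varepsilon}\Prv{f(\mathcal{M}(x'))\in S'}.
\]
The inequality is exactly the $\varepsilon$-LDP condition for $\mathcal{M}$.

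\textbf{Randomized $f$.} Model $f$ as a kernel $K(r,S')=\Pr[f(r)\in S']$. Its internal randomness is independent of $\mathcal{M}$ and of the input. Then
\[
\Pr[f(\mathcal{M}(x))\in S']=\int_R K(r,S')\,d\mu_x(r),
\]
where $\mu_x$ is the law of $\mathcal{M}(x)$. The key step is to turn the set inequality $\mu_x(S)\le e^{\varepsilon}\mu_{x'}(S)$ into a bound on integrals of nonnegative functions. I would do this in two stages. First, for any simple function $g=\sum_k a_k\mathbf{1}_{S_k}$ with $a_k\ge 0$, linearity gives $\int g\,d\mu_x\le e^{\varepsilon}\int g\,d\mu_{x'}$. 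Second, since $0\le K(\cdot,S')\le 1$ is measurable, the monotone convergence theorem extends this to $g=K(\cdot,S')$. That yields $\Pr[f(\mathcal{M}(x))\in S']\le e^{\varepsilon}\Pr[f(\mathcal{M}(x'))\in S']$.

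An alternative route avoids kernels: condition on the random seed $\omega$ of $f$, apply the deterministic case to each $f_\omega$, and average over $\omega$. This needs a Fubini argument, justified by the independence of $\omega$ from the mechanism.

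\textbf{Main obstacle.} The only delicate point is measure-theoretic rigor in the randomized case: extending the inequality from sets to nonnegative functions, and checking measurability. In the paper's discrete setting, with finite bit-plane outputs, the integrals are just finite sums, and the argument reduces to summing the pointwise inequality $\Pr[\mathcal{M}(x)=r]\le e^{\varepsilon}\Pr[\mathcal{M}(x')=r]$ weighted by $K(r,S')\ge 0$. I would present this discrete version as the main proof and remark that the general case follows from the simple-function approximation above.
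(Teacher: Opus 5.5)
Your proof is correct, for the deterministic case and for the randomized case. The paper does not prove this theorem itself; it quotes it from Dwork et al. The only related argument in the paper is step~3 of the supplementary proof of Theorem~\ref{thm:dpslicing_pixel}. There it takes the single deterministic reconstruction map $f(\tilde{x}_1,\dots,\tilde{x}_d)=\sum_{\ell}2^{d-\ell}\tilde{x}_\ell$, forms the preimage $T=\{z: f(z)\in S\}$, and applies the LDP inequality to $T$. Your deterministic case is exactly that argument.

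Your randomized case is the part that goes beyond the paper. You model $f$ as a Markov kernel $K(r,S')$ independent of the input. You then extend $\mu_x(S)\le e^{\varepsilon}\mu_{x'}(S)$ from sets to nonnegative measurable functions via simple functions and monotone convergence, or equivalently condition on the independent seed of $f$ and average. The statement as written (``arbitrary randomized mapping'') needs this step, but the paper never supplies it, because its own post-processing step is deterministic.

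Your route proves the theorem in full generality. The paper's route is shorter but covers only the special case it uses, a deterministic map on the finite set $\{0,1\}^{24}$, where no integration is needed. For that setting your discrete finite-sum version is already fully rigorous.

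Two small points are worth stating explicitly in a write-up. First, $f$ (respectively $r\mapsto K(r,S')$) must be measurable, which is how ``arbitrary'' should be read. Second, the input domain $\mathbb{N}^{|\mathcal{X}|}$ in the statement is a leftover from the central-DP formulation. Your proof works with arbitrary inputs $x,x'$, so it implicitly and correctly treats the domain as $\mathcal{X}$.
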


\section{Methodology}
Our goal is to design a local privacy mechanism $\mathcal{M}$ that transforms an image $I$ into a privatized version $\tilde{I}$ such that $\tilde{I}$ satisfies per-pixel $\varepsilon$-LDP and remains useful for downstream vision tasks. At its core, the LDP mechanism provides privacy by injecting calibrated noise into the data source and makes it statistically indistinguishable from original input. The challenge is that the effectiveness of this noise injection is proportional to the size of the input space.

Consider a $k$-ary LDP mechanism (\eg, general randomized response \cite{randomrespnse}),  the mechanism must obfuscate the true value among all $k-1$ alternatives. The probability of reporting the true value is $p = \frac{e^{\varepsilon}}{e^{\varepsilon}+k-1}$. For an image with 8-bit pixels, the cardinality is extremely large ($k=256$). Even for a moderate privacy budget, the probability of reporting the true pixel value becomes vanishingly small, forcing the output to be almost pure noise.

We argue that this utility loss is not an unavoidable cost of LDP and can be solved with correct data representation. Our core insight is this: A pixel with 256 discrete states is a binary encoding of eight bits, and the significance of each bit corresponds to the semantic structure of the pixel. This is the foundation of LDP-Slicing (\cref{fig:pipline}), where we can allocate privacy budget based on the significance of the bit. Our framework is composed of three principled modules: first, a perceptual obfuscation stage to defeat human inspection. Second, our core bit-plane randomization mechanism provides a formal LDP guarantee against algorithmic attacks. Finally, a utility-aware optimization to allocate the privacy budget for maximum downstream utility.
\subsection{Perceptual obfuscation via wavelet pruning}
To provide a holistic defense, we begin by addressing the threat of direct human inspection. While our core LDP mechanism protects against algorithmic inference, at weaker privacy settings (\eg, larger $\varepsilon$), the noisy image may still retain residual structure perceptible to a human observer. To address this, we introduce a perceptual obfuscation as a pre-processing step.
This stage is motivated by a well-studied asymmetry in perception: human vision primarily uses low-frequency image cues (\eg, broad shapes and smooth regions) to recognize content~\cite{hvs}, whereas modern convolutional neural networks (CNNs) often exploit fine, high-frequency details~\cite{abello2021dissecting}. We exploit this by using a 1-level Haar Discrete Wavelet Transform (DWT) to decompose each image channel into a low-frequency approximation sub-band (LL) and three high-frequency detail sub-bands (LH, HL, HH). We then perform LL-Pruning by setting all coefficients in the low-frequency LL band to zero before reconstructing the image via the inverse DWT (IDWT). The effect of this process is visualized in \cref{fig:five_in_row} and \cref{fig:aba_dc}. Unlike the block-based Discrete Cosine Transform (DCT) and other frequency transformations, the DWT decomposes the image at multiple resolutions, and avoids introducing the artifacts common to DCT when frequency coefficients are aggressively pruned. This leads to better preservation of useful high-frequency details, as we empirically validate it in our ablation study \cref{tab:abla}.
\begin{figure}[t]                     
  \centering
  \includegraphics[width=0.9\columnwidth]{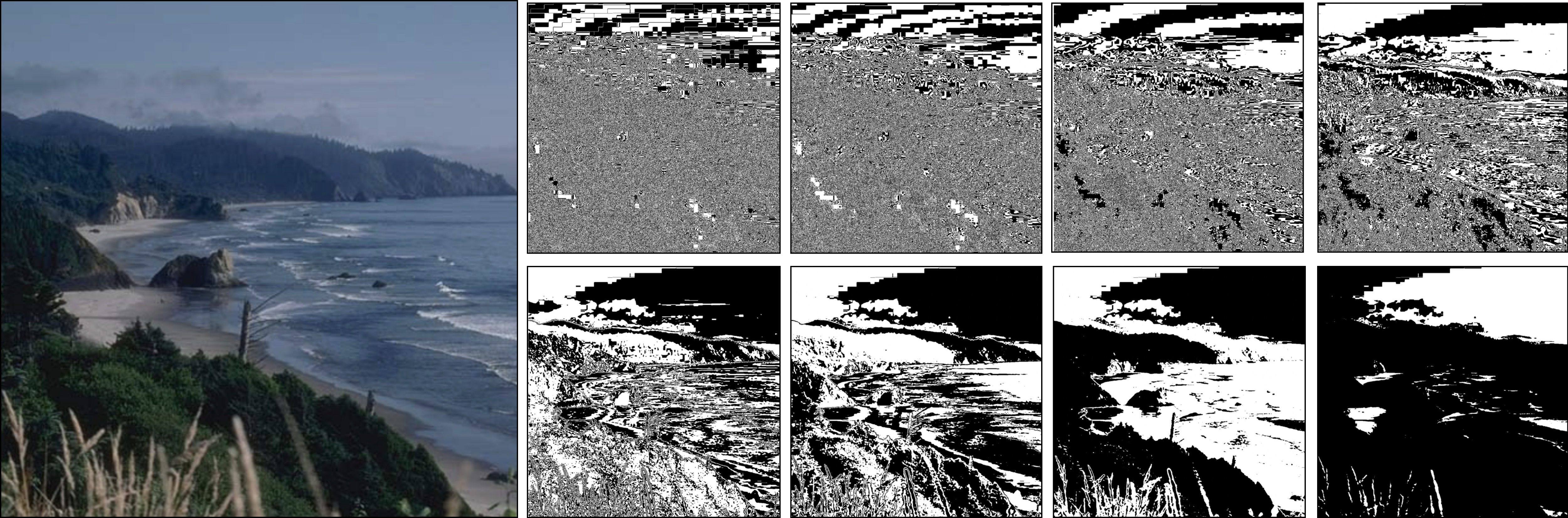}
  \caption{\textbf{Bit-plane slicing reveals the non-uniform distribution of structural information.} An 8-bit image (left) is decomposed into its planes (\textbf{right}), from LSB (top-left) to MSB (bottom-right). This visualization shows that coarse structural information is concentrated in the high-order MSB planes, while low-order LSB planes consist primarily of noise-like texture. This motivates our non-uniform, utility-aware budget optimization strategy.
  }
  \label{fig:bitplane}
\end{figure}
\begin{figure}[t]                     
  \centering
  \includegraphics[width=0.9\columnwidth]{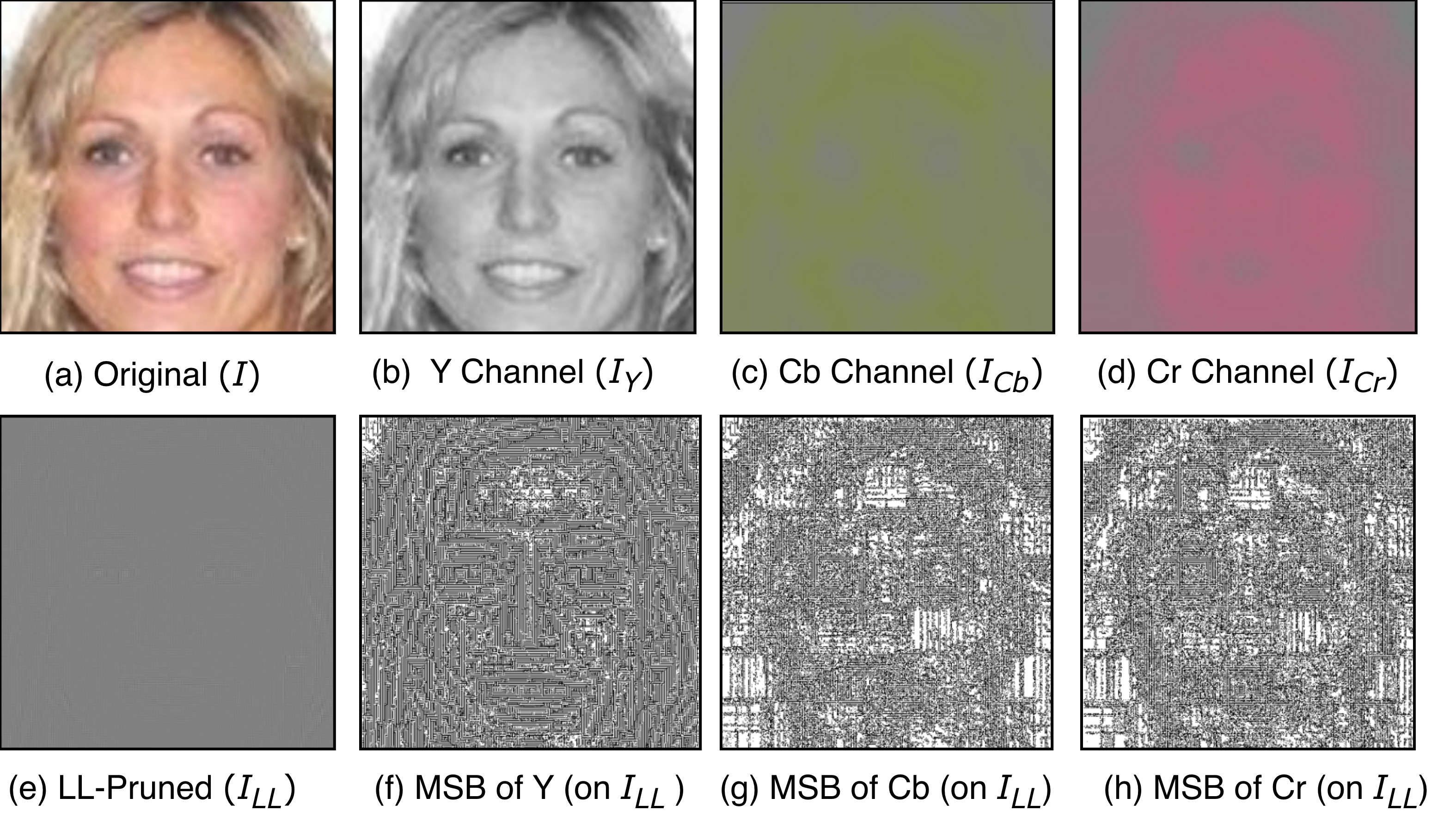}
  \caption{\textbf{Visual justification for channel-aware allocation and perceptual obfuscation.} This figure shows an original image (a) and its YCbCr components before and after LL-pruning. It validates two key decisions: (1) Channel Importance: The luma (Y) channel (b) contains the most structural information compared to the chroma channels (c,d). (2) Post-Pruning Signal: After LL-pruning (e), the resulting image (e) is perceptually obfuscated for human viewers. The Y-channel's MSB (f) still retains significant high-frequency detail for machine learning, unlike the less informative chroma MSBs (g,h).
  }
  \label{fig:five_in_row}
\end{figure}
\subsection{Pixel-level LDP via bit-plane randomization}
With the image perceptually obfuscated, we now introduce our core mechanism to provide a formal $\varepsilon$-LDP guarantee. We achieve this by first changing the data representation through Bit-Plane Slicing (BPS). 

For an image with $d$-bit pixel (typically $d=8$), each pixel $x \in [0, 2^d-1]$ is decomposed into a sequence of binary bits $\{x_1,...,x_d\}$, where:
\begin{equation}
x_{\ell} \;=\;
\Bigl\lfloor \frac{x}{2^{d-\ell}} \Bigr\rfloor \bmod 2,
\quad \ell \in \{1,\dots,d\}.
\end{equation}  
This process decomposes the image into a total of 24 bit planes across Y, Cb, and Cr channels (\cref{fig:bitplane}). In this binary domain, we can now apply the randomized response mechanism independently to flip each bit \( x_\ell \in \{0, 1\} \). This produces a privatized bit \( \tilde{x}_\ell \) according to: 
\begin{equation}
\Pr\!\bigl[\mathcal{M}_{RR}(x_\ell)=\tilde{x}_\ell\bigr]
   =\;       
   \begin{cases} 
\frac{e^{\varepsilon_{\ell}}}{\,e^{\varepsilon_{\ell}} + 1\,}.     & \text{if } \tilde{x}_\ell = x_\ell,\\
\frac{1}{\,e^{\varepsilon_{\ell}} + 1\,}.    & \text{otherwise}.
\end{cases}
\end{equation}
where $\varepsilon_{\ell}$ is the privacy budget allocated to that specific bit-plane. The question then arises: how should the total privacy budget $\varepsilon_\text{total}$ be distributed among the individual bit-plane budgets $\{\varepsilon_\ell\}$?
\subsection{Utility-aware privacy budget optimization}\label{sec:optmization}
A naive application of randomized response would allocate the total privacy budget $\varepsilon_{\text{total}}$ uniformly across all 24 bit-planes (\eg, $\varepsilon_{\ell}=\varepsilon_{\text{total}}/24$). However, we argue this is suboptimal. As \cref{fig:bitplane} and \cref{fig:five_in_row} clearly show, not all bits are created equal. The Most Significant Bits (MSBs) of the Luma (Y) channel contain the vast majority of the image's structural information. Wasting privacy budget on noisy, low-impact Least Significant Bits (LSBs) and less important chroma channels degrades utility for no meaningful privacy gain.

To formalize this intuition, we therefore frame budget allocation as a constrained optimization problem. Let $\varepsilon_{c,b}$ be the privacy budget allocated to bit-plane $b\in\{1,...,8\}$ of color channel $c\in\{\text{Y,Cb,Cr}\}$. Our objective is to find the set of budgets $\{\varepsilon_{c,b}\}$ that minimizes weighted distortion (total utility loss), subject to a fixed total budget:
\begin{equation}
\begin{aligned}
\min_{\{\varepsilon_{c,b}\}} \quad & 
\sum_{c,b} \frac{W_{c,b}}{\varepsilon_{c,b}} \\[4pt]
\text{s.t.} \quad & 
\sum_{c,b} \varepsilon_{c,b} = \varepsilon_{\mathrm{total}}, 
& \varepsilon_{c,b} \ge 0.
\end{aligned}
\end{equation}
The effectiveness of this optimization hinges on defining principled importance weights, $W_{c,b}$. We derive these from two well-established properties of digital images and human perception:
\begin{itemize}
\item \textbf{Color channel sensitivity ($w_c$):} As modern CNNs are robust to reduced chroma information~\cite{gueguen2018faster} and inspired by chroma subsampling (\eg, JPEG compression \cite{125072}). We assign a higher weight to the Y color channel. Specifically, we set $w_{\mathrm{Y}} = 4$, $w_{\mathrm{Cb}} = w_{\mathrm{Cr}} = 1$.
\item \textbf{Bit-plane significance ($w_b$):} Each bit contributes $2^{b-1}$ to the pixel value. Thus, $w_b = 2^{b-1}$ reflects its impact on image fidelity.
\end{itemize}
The total weight is the product, $W_{c,b} = w_c \cdot w_b$. With the distortion of randomized response being inversely related to the budget, we solve this optimization using Lagrange multipliers (details in the supplementary):
\begin{equation}
\boxed{
\varepsilon_{c,b} = 
\varepsilon_{\text{total}} \cdot 
\frac{
    \sqrt{W_{c,b}}
}{
    \sum_{i \in \{\mathrm{Y}, \mathrm{Cb}, \mathrm{Cr}\}}
    \sum_{j=1}^{8} \sqrt{W_{i,j}}
}
}.
\end{equation}
This allocation technique ensures that the most critical information receives the most privacy budget (\eg, the least noise). After perturbation, each pixel is reconstructed via:
\begin{equation} 
\tilde{x} = 
\sum_{\ell=1}^{8} 
2^{8-\ell} \cdot \tilde{x}_{\ell},
\end{equation}
The resulting image provides a formal $\varepsilon$-LDP guarantee while remaining fully compatible with standard vision models such as recognition and classification. 
\subsection{Privacy analysis}\label{sec:privacyana}
We now prove that LDP-Slicing provides a rigorous per-pixel LDP guarantee (see the full proof in the supplementary).
\begin{theorem}[LDP-Slicing satisfies pixel-level $\varepsilon_{\text{total}}$–LDP]
\label{thm:dpslicing_pixel}
Let $\varepsilon_{\text{total}} = \sum_{c,b} \varepsilon_{c,b}$ be the per-pixel privacy budget. LDP-Slicing guarantees $\varepsilon_{\text{total}}$-LDP for each pixel.
\end{theorem}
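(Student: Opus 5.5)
We treat a single pixel as the input. Let $x=(x^{\mathrm{Y}},x^{\mathrm{Cb}},x^{\mathrm{Cr}})$ be the YCbCr values of one pixel after perceptual obfuscation, so that its bit representation is $(x_{c,b})_{c,b}$, with 24 bits in total. Our plan is to show that the released pixel $\tilde{x}$ is an $\varepsilon_{\text{total}}$-LDP function of $x$. We will do this in three steps: first establish the guarantee for each bit, then combine the bits by sequential composition, and finally pass to the reconstructed value by post-processing.

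\textbf{Step 1 (single bit).} For any fixed $(c,b)$ and any output bit $\tilde{v}\in\{0,1\}$, Definition~\ref{def:rr} with $p=e^{\varepsilon_{c,b}}/(e^{\varepsilon_{c,b}}+1)$ gives
\[
\frac{\Pr[\mathcal{M}_{RR}(v)=\tilde v]}{\Pr[\mathcal{M}_{RR}(v')=\tilde v]}\le \frac{p}{1-p}=e^{\varepsilon_{c,b}}
\]
for all $v,v'\in\{0,1\}$. Summing over outputs in $S$ extends this to every $S\subseteq\{0,1\}$. Hence the randomizer applied to bit $(c,b)$ is $\varepsilon_{c,b}$-LDP with respect to that bit.

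We then lift this to the whole pixel. Define $\mathcal{M}_{c,b}(x)=\mathcal{M}_{RR}(x_{c,b})$, which is the deterministic bit extraction of Eq.~(1) followed by randomized response. For any two pixels $x,x'$, the pair of bits $x_{c,b},x'_{c,b}$ are either equal, giving ratio $1$, or differ, giving ratio at most $e^{\varepsilon_{c,b}}$. So $\mathcal{M}_{c,b}$ is $\varepsilon_{c,b}$-LDP as a mechanism on $\mathcal{X}$.

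\textbf{Step 2 (composition).} Apply Theorem~\ref{thrm:seqcom} inductively over the 24 mechanisms $\mathcal{M}_{c,b}$. The joint release $(\mathcal{M}_{c,b}(x))_{c,b}$ is then $\sum_{c,b}\varepsilon_{c,b}=\varepsilon_{\text{total}}$-LDP. Alternatively, since the coins are independent, the joint probability factorizes, and the likelihood ratio is bounded by $\prod_{c,b}e^{\varepsilon_{c,b}}$ directly.

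The budget allocation in Eq.~(4) matters only because it makes the $\varepsilon_{c,b}$ nonnegative and sum to $\varepsilon_{\text{total}}$. The privacy argument does not depend on the particular weights $W_{c,b}$.

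\textbf{Step 3 (post-processing).} The reconstruction $\tilde{x}=\sum_\ell 2^{8-\ell}\tilde x_\ell$, applied per channel, is a deterministic function of the released bits. Any later inverse color transform is likewise deterministic. Theorem~\ref{thrm:postpo} therefore gives $\varepsilon_{\text{total}}$-LDP for the output pixel.

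\textbf{Main obstacle: the DWT/LL-pruning pre-processing.} This map $g$ is applied to the raw image before randomization, and it mixes neighbouring pixels. Post-processing does not cover pre-processing. We would therefore state the guarantee with respect to the pixel input to the bit-plane stage, i.e.\ $g(I)$ at that location. In that setting the mechanism is LDP with respect to any two values of that pixel.

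To cover the raw pixel as well, we note that the bound in Step 2 holds for \emph{every} pair of 24-bit inputs, not only neighbouring ones. So for any raw inputs $I,I'$ the outputs at the pixel still satisfy the ratio bound $e^{\varepsilon_{\text{total}}}$, because $g$ only changes which pair of bit-strings is fed in. Since LDP quantifies over all input pairs, composing with any deterministic pre-map preserves the guarantee at that pixel. One caveat must be checked: $g$ must be quantized and clipped to $[0,255]$ so that the bit decomposition is well defined.
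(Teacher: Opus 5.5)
Your proof is correct and follows the paper's argument: randomized response on each bit with ratio $p/(1-p)=e^{\varepsilon_{c,b}}$, composition over the 24 independent bits via the factorized likelihood ratio, and post-processing through the deterministic reconstruction map. Your handling of LL pruning also matches the paper, which takes $\mathcal{X}$ to be the post-pruning pixel domain and treats pruning as a public deterministic pre-map. Your observation that LDP quantifies over all input pairs, so any deterministic pre-map preserves the per-pixel bound, spells out what the paper only asserts, and your quantization and clipping caveat is a fair point the paper leaves implicit.
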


\begin{proof}[Proof sketch]
The proof relies on two basic properties of LDP:
\begin{enumerate}
    \item \textbf{Per-pixel composition:}  
    Our mechanism applies randomized response independently to each of the 24 bits that constitute a color pixel.  
    By the composition~\cref{thrm:seqcom}, the full pipeline is  
    $\left(\sum_{c,b}\varepsilon_{c,b}\right)$-LDP, which equals $\varepsilon_{\text{total}}$-LDP.
    \item \textbf{Immunity to post-processing:}  
    The final reconstruction step is a data-independent function applied to the privatized bits.  
    By the post-processing~\cref{thrm:postpo}, such transformations cannot weaken privacy.
\end{enumerate}
In particular, the wavelet pruning is a public, deterministic pre-processing step and does not consume the privacy budget $\varepsilon$. Therefore, the entire LDP-Slicing pipeline guarantees that each output pixel is $\varepsilon_\text{total}$-LDP with respect to the original pixel value.
\end{proof}

\noindent\textbf{The meaning of privacy guarantee.}
A formal proof is necessary, but what does this guarantee mean for the adversary defined in \cref{sec:attack} ? The $\varepsilon$-LDP guarantee provides a hard upper bound on the adversary's ability to distinguish between any two privatized images. This is captured by the Total Variation (TV) distance, which measures the maximum possible statistical difference between two distributions. A small TV distance means the outputs are statistically similar. For LDP-Slicing, this distance is tightly bounded:
 \begin{corollary}[Total Variation bound under $\varepsilon$‑LDP]
\label{lem:tv-bound}
Let $\mathcal{M}$ be an $\varepsilon$‑LDP mechanism (\cref{def:dp}). For any two inputs
$x,x'\!\in\!\mathcal{X}$, and
$P=\mathcal{L}(\mathcal{M}(x))$ and
$Q=\mathcal{L}(\mathcal{M}(x'))$.  Then
\begin{align}
  {\mathrm{TV}}(P,Q)
  &= \sup_{S\subseteq\mathcal{X}}\lvert P(S)-Q(S)\rvert
   \nonumber\\[2pt]
  &\le
  \frac{e^{\varepsilon}-1}{e^{\varepsilon}+1}
   = \tanh\!\bigl(\varepsilon/2\bigr).
  \label{eq:tv-bound}
\end{align}
Consequently, in the identity distinguishing attack in \cref{def:identity-disthin}, the adversary's advantage is upper bounded by:
\begin{align}
\label{eq:adv_advantage}
  \Adv^{\mathrm{link}}_{\mathcal{M}}
  \;\le\;
  \tfrac12\,\tanh\!\bigl(\varepsilon/2\bigr).
\end{align}
The full proof is in the supplementary.
\end{corollary}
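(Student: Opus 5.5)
To prove the TV bound, fix inputs $x,x'$ and write $P=\mathcal{L}(\mathcal{M}(x))$, $Q=\mathcal{L}(\mathcal{M}(x'))$. Definition~\ref{def:dp} applied in both directions gives $P(S)\le e^{\varepsilon}Q(S)$ and $Q(S)\le e^{\varepsilon}P(S)$ for every measurable $S$. I would fix an arbitrary $S$, set $a=P(S)$ and $b=Q(S)$, and record the constraints these impose. Applying LDP to the complement $S^c$ gives $1-a\le e^{\varepsilon}(1-b)$ as well as $a\le e^{\varepsilon}b$.

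The goal is to maximize $a-b$ subject to these two linear constraints, which is a small linear program in $(a,b)\in[0,1]^2$. The maximum is attained where both constraints are tight: $a=e^{\varepsilon}b$ and $1-a=e^{\varepsilon}(1-b)$. Solving gives $b=1/(e^{\varepsilon}+1)$ and $a=e^{\varepsilon}/(e^{\varepsilon}+1)$, so
\[
  a-b=\frac{e^{\varepsilon}-1}{e^{\varepsilon}+1}=\tanh(\varepsilon/2).
\]
To make this rigorous without an LP argument, I would add the two inequalities directly. From $a\le e^{\varepsilon}b$ we get $a-b\le(e^{\varepsilon}-1)b$, and from the complement inequality we get $a-b\le(e^{\varepsilon}-1)(1-a)$. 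Combining these with nonnegative weights eliminates $a$ and $b$:
\[
  (a-b)(1+e^{\varepsilon})\le (e^{\varepsilon}-1)\bigl(e^{\varepsilon}b+1-a\bigr)\le e^{\varepsilon}-1,
\]
where the last step uses $a\le e^{\varepsilon}b$ again. Checking that this weighting is consistent is the one real calculation, and I expect it to be the main obstacle. By symmetry in $x,x'$ the same bound holds for $b-a$, and taking the supremum over $S$ gives \eqref{eq:tv-bound}. This bound is tight, since binary randomized response (Definition~\ref{def:rr}) attains it.

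For the advantage bound, I would condition on the identity $i$, the private image $x$, and the public image $y$. Among these, $y$ carries the information distinguishing the match case from the non-match case, but so does the distribution of $\tilde x$. The clean route is the standard testing lemma: for any adversary, $\Pr[b'=b]-\tfrac12\le\tfrac12\,\mathrm{TV}(R_1,R_0)$, where $R_1$ and $R_0$ are the joint laws of $(\tilde x,y)$ under match and non-match. The difficulty is that $R_1$ and $R_0$ differ through the correlation between $x$ and $y$, not only through the mechanism. The step I would try is to couple the two experiments so that $y$ is fixed and they differ only in which private input feeds $\mathcal{M}$: $x\sim D_i^{\mathrm{priv}}$ versus a private input from the other identity. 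Conditioned on $y$, the two laws of $\tilde x$ are mixtures $\int P_x\,d\mu$ and $\int P_{x'}\,d\nu$. By the LDP ratio, these mixtures still satisfy the $e^{\varepsilon}$ ratio bound, so the first part applies to them. Convexity of TV then gives $\mathrm{TV}(R_1,R_0)\le\tanh(\varepsilon/2)$, which yields \eqref{eq:adv_advantage}. Justifying this coupling, in particular interpreting the advantage as distinguishing two private inputs given $y$, is where I expect to need care. I may have to state it as a modelling assumption that $y$ is independent of the experiment bit given the pair of candidate identities.
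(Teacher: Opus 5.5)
Your skeleton matches the paper's: two-sided ratio bounds give the TV bound, and the testing identity $\Pr[b'=b]=\tfrac12(1+P(S)-Q(S))$ gives the advantage bound. However, the calculation you flagged as the main obstacle is wrong as written, in two places.

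First, the complement constraint you need is $Q(S^c)\le e^{\varepsilon}P(S^c)$, i.e.\ $1-b\le e^{\varepsilon}(1-a)$, not $1-a\le e^{\varepsilon}(1-b)$. With your version, $a=e^{\varepsilon}b$ and $1-a=e^{\varepsilon}(1-b)$ together force $e^{\varepsilon}=1$. Moreover, the LP value under the constraints you listed is $1-e^{-\varepsilon}$ (attained at $a=1$, $b=e^{-\varepsilon}$), which is strictly larger than $\tanh(\varepsilon/2)$. The optimum you report, $a=e^{\varepsilon}/(e^{\varepsilon}+1)$ and $b=1/(e^{\varepsilon}+1)$, is the vertex of the correct system. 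Likewise, your inequality $a-b\le(e^{\varepsilon}-1)(1-a)$ follows only from the correct constraint.

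Second, the elimination step fails. Weighting the first inequality by $e^{\varepsilon}$ produces $(e^{\varepsilon}-1)(e^{\varepsilon}b+1-a)$, and $a\le e^{\varepsilon}b$ gives $e^{\varepsilon}b+1-a\ge 1$, the opposite of what you need. Use equal weights instead: adding $a-b\le(e^{\varepsilon}-1)b$ and $a-b\le(e^{\varepsilon}-1)(1-a)$ gives $2(a-b)\le(e^{\varepsilon}-1)\bigl(1-(a-b)\bigr)$, i.e.\ $(a-b)(e^{\varepsilon}+1)\le e^{\varepsilon}-1$. Once repaired, this is a legitimate set-based filling-in of the step the paper only asserts from $e^{-\varepsilon}\le dP/dQ\le e^{\varepsilon}$.

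For the advantage, you have spotted something the paper's proof skips. It takes $P,Q$ to be the laws of the observation $(\tilde x,y)$ under $b=1$ and $b=0$, then applies the TV bound proved for $\mathcal{L}(\mathcal{M}(x))$ and $\mathcal{L}(\mathcal{M}(x'))$, which these laws are not. Your conditioning-on-$y$ repair works. Given $y$, both laws of $\tilde x$ are mixtures $\int P_x\,d\mu$ and $\int P_{x'}\,d\nu$. Integrating $P_x(S)\le e^{\varepsilon}P_{x'}(S)$ against $\mu\otimes\nu$ shows they satisfy the same two-sided set-ratio bound, so part one applies, and averaging over $y$ gives $\mathrm{TV}(R_1,R_0)\le\tanh(\varepsilon/2)$.

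That averaging step, however, requires the \emph{marginal} law of $y$ to coincide under $b=1$ and $b=0$ (e.g.\ $i$ uniform and $j$ uniform on $\mathcal{I}\setminus\{i\}$). Your proposed condition, independence of $y$ and $b$ given the pair of candidate identities, is not the right one: given $(i,j)$, $y\sim D_i^{\mathrm{pub}}$ in one branch and $y\sim D_j^{\mathrm{pub}}$ in the other. The marginal condition is genuinely necessary. Take two identities with prior mass $0.9$ and $0.1$, so the non-match partner is always the other identity, and disjoint public distributions. Guessing $b'=1$ iff $y$ comes from the likelier identity then yields advantage $0.4$, even for a constant ($0$-LDP) mechanism. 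You should state it explicitly as a hypothesis of the corollary.
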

Another crucial implication of our work is how pixel-level LDP provides effective image-level protection. Although composing the privacy bounds across all pixels implies a loose worst-case theoretical guarantee for the full image, semantic identity inherently relies on spatial correlation (\eg, contours and textures). By applying independent binary randomized response to every pixel's bit-planes, LDP-Slicing guarantees strict $\varepsilon$-LDP indistinguishability for every individual bit. This independent perturbation severely disrupts structural dependencies and erodes global semantic information, as evidenced by the low attack advantages in Sec.~\ref{sec:attack}.


\begin{table}[t]
\caption{\textbf{Benchmarks on face recognition accuracy (\%).} LDP-Slicing is compared with a non-private baseline, heuristic methods, and methods with formal privacy guarantees. \textbf{Bold} indicates the best result among DP/LDP methods.}
\label{tab:perf}
\centering
\resizebox{\columnwidth}{!}{
\begin{tabular}{l l c c c c}
\toprule
\textbf{Method} & \textbf{DP/LDP Guarantee} & \textbf{AgeDB-30} & \textbf{LFW} & \textbf{CPLFW} & \textbf{CALFW} \\
\midrule
ArcFace      & None         & 97.88 & 99.77 & 92.77 & 96.05 \\
\midrule
InstaHide  & None         & 79.58 & 96.53 & 81.03 & 86.24 \\
Cloak           & None         & 92.60 & 98.91 & 83.43 & 92.18 \\
PPFR-FD      & None         & 97.23 & 99.69 & 90.19 & 95.60 \\
DuetFace    & None         & 97.93 & 99.82 & 92.77 & 96.10 \\
PartialFace  & None      & 97.79 & 99.80 & 92.03 & 96.07 \\
ProFace       & None         & 92.81 & 98.27 & 88.17 & 93.20 \\
AdvFace      & None         & 92.57 & 98.45 & 83.73 & 93.62 \\
MinusFace   & None         & 97.57 & 99.78 & 91.90 & 95.90 \\
\midrule
PEEP        & Feature-level & 87.47 & 98.41 & 79.58 & 90.06 \\
DCTDP          & Block-level & 94.37 & 99.48 & 90.60 & 93.47 \\
\textbf{LDP-Slicing (Ours)} & \textbf{Pixel-level} & \textbf{96.68} & \textbf{99.75} & \textbf{91.08} & \textbf{96.02} \\
\bottomrule
\end{tabular}}
\end{table}

\section{Experiments}
We evaluate LDP-Slicing on two challenging domains: Privacy-Preserving Face Recognition (PPFR) and Privacy-Preserving image classification. 

\noindent\textbf{Datasets.} For face recognition,  we train on the MS1M-ArcFace (MS1MV2) dataset \cite{guo2016msceleb1mdatasetbenchmarklargescale} and evaluate on four standard benchmarks testing performance across pose, age, and quality variations: LFW \cite{lfw}, CPLFW \cite{cplwf}, and CALFW \cite{calwf}, and AgeDB-30\cite{agedb}. For image classification, we use the CIFAR-10 and CIFAR-100 \cite{cifar100} datasets for both training and testing. For reconstruction attack, we train inversion models on public BUPT dataset \cite{zhang2020class}.

\noindent\textbf{Implementation details.} For PPFR, we use a pretrained ResNet-50 backbone \cite{7780459} with ArcFace loss \cite{arcface}, trained for 24 epochs with SGD \cite{robbins1951stochastic}. For small privacy budgets (\eg, $\varepsilon = 2.4$), we use gradient clipping (max norm = 1) to mitigate gradient vanishing. Unless specified, we set default testing privacy budget $\varepsilon_\text{total} = 20$. For classification, we train a ResNet-56 from scratch. Full hyperparameter details are provided in the supplementary.
\begin{figure}[t]
  \centering
  \includegraphics[width=\columnwidth]{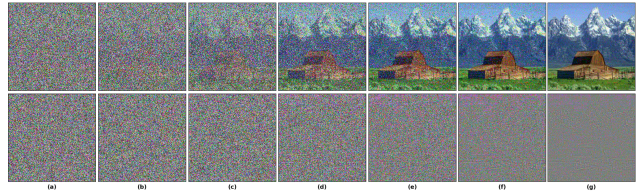}
  \caption{\textbf{Visual comparison between LL pruning (Bottom) versus without pruning (Top).}
  \textbf{(a)} $\varepsilon_\text{total}=1$. 
  \textbf{(b)} $\varepsilon_\text{total}=2.4$. 
  \textbf{(c)} $\varepsilon_\text{total}=5.2$. 
  \textbf{(d)} $\varepsilon_\text{total}=12$. 
  \textbf{(e)} $\varepsilon_\text{total}=20$. 
  \textbf{(f)} $\varepsilon_\text{total}=32$. 
  \textbf{(g)} $\varepsilon_\text{total}=58$.}
  \label{fig:aba_dc}
\end{figure}
\subsection{Privatized image in downstream tasks}                      
\noindent\textbf{Methods for comparison.} 
For face recognition, we benchmark LDP-Slicing against three categories of methods: (1) The non-private ArcFace baseline \cite{arcface}, (2) Heuristic methods that obfuscate data but offer no formal privacy guarantees (InstaHide \cite{huang2021}, Cloak \cite{cloak}, PPFR-FD \cite{ppfr-fd}, DuetFace \cite{duetface}, and MinusFace \cite{minusface}), (3) Methods with formal DP guarantees including the feature-level PEEP \cite{peep} (with $\varepsilon =5$ per PCA components) and block-level DCTDP \cite{dpdct} (with $\varepsilon_\text{mean}=0.5$). For image classification, we compare LDP-Slicing against a non-private ResNet-56 as baseline and centralized DP method DP-SGD \cite{Abadi_2016}. We set $\delta$ to $1e-10$ to force it close to pure-DP. 

\noindent\textbf{Privacy-preserving face recognition.}
LDP-Slicing in \cref{tab:perf} consistently outperforms all DP/LDP-based methods on all face recognition benchmarks and close to many heuristic SOTAs. On LFW and CALFW, our performance is nearly identical to the non-private baseline. DCTDP reports privacy at the block-level, which is not directly comparable to our per-pixel notion. For a fair comparison, we provide a theoretical upper-bound conversion of their budget in supplementary, which suggests our privacy guarantee is 5 times stricter than DCTDP.
\noindent\textbf{Privacy-preserving image classification.}
For image classification benchmarks, LDP-Slicing decisively outperforms DP-SGD at all privacy budgets $\varepsilon \leq12$ on CIFAR-10 and all budgets on CIFAR-100.
    
\begin{figure}[t]                     
  \centering
 \includegraphics[width=0.8\columnwidth]{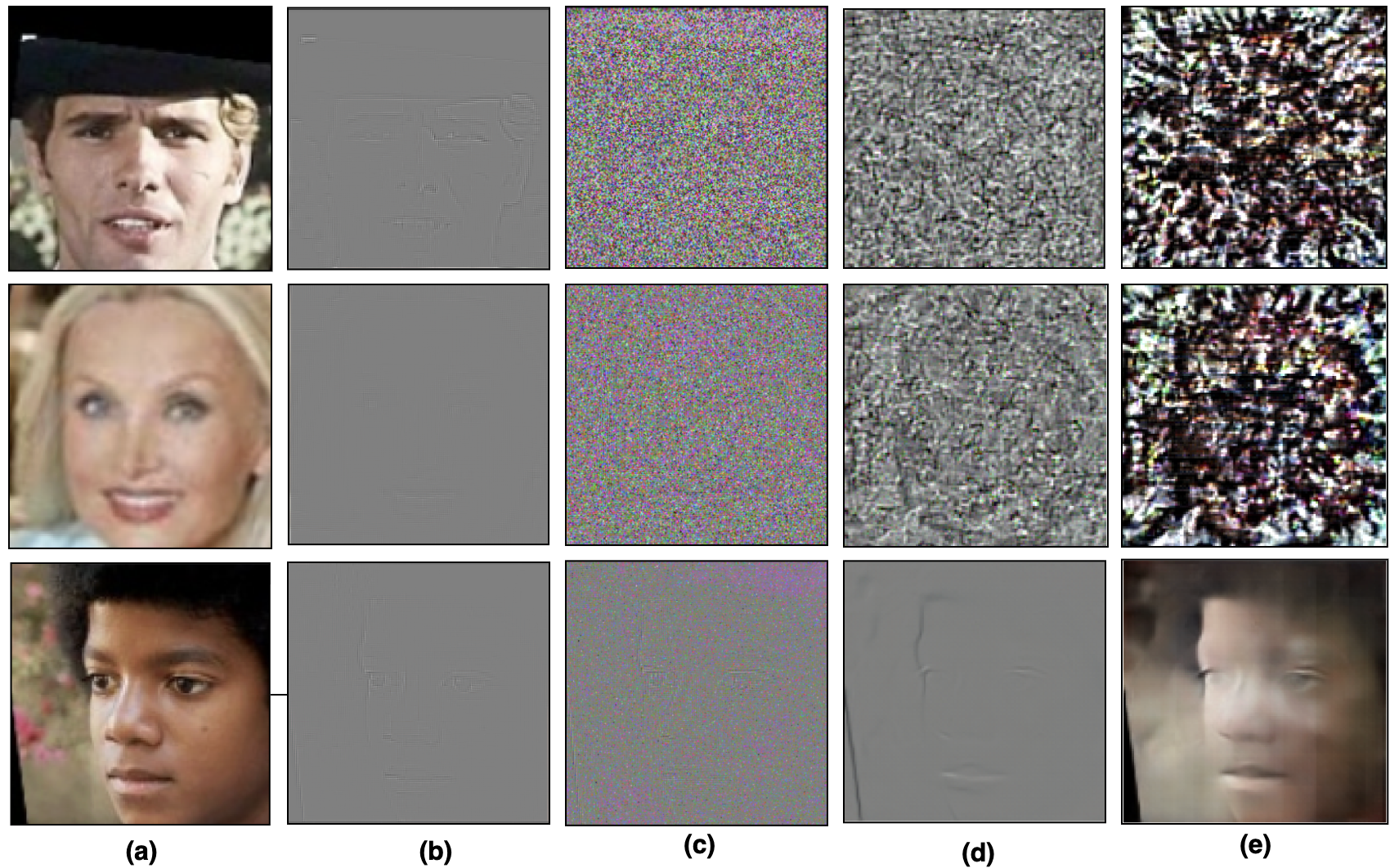}
  \caption{\textbf{Resilience to a white-box reconstruction attack.} The adversary has full knowledge of our pipeline and trains a specialized two-stage inversion model. From top to bottom $\varepsilon_{total}=5.2$, $\varepsilon_{total}=20$, $\varepsilon_{total}=58$ \textbf{(a)} The original images. \textbf{(b)} original images after LL removed. \textbf{(c)} LDP-Slicing privatized images. \textbf{(d)} LL recovered images. \textbf{(e)} Final recovered images.}
\label{fig:white_attack}
\end{figure}
\begin{figure}[t]                     
  \centering
  \includegraphics[width=0.8\columnwidth]{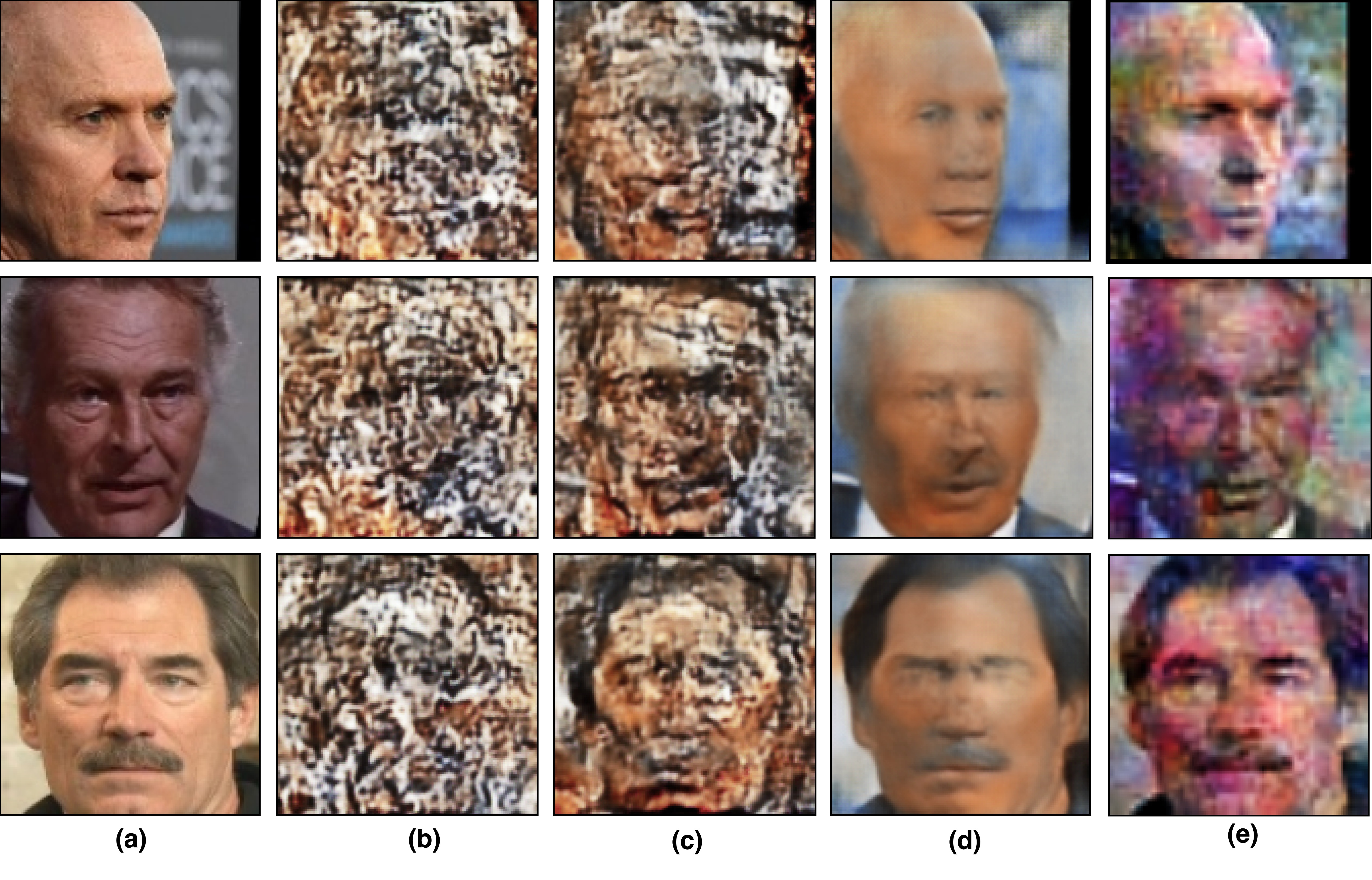}
  \caption{\textbf{Resilience to black-box reconstruction attack.} \textbf{(a)} The original images. Recovered LDP-Slicing image from \textbf{(b)} $\varepsilon_{total}=5.2$, \textbf{(c)} $\varepsilon_{total}=20$, and  \textbf{(d)} $\varepsilon_{total}=58$. \textbf{(e)} Recovered \textbf{DCTDP} image ($\varepsilon_{mean}=0.5$).}
  \label{fig:blk_attack}
\end{figure}
\begin{figure*}[t] 
  \centering
  \begin{subfigure}{0.45\textwidth}
    \centering
    \includegraphics[width=0.55\linewidth]{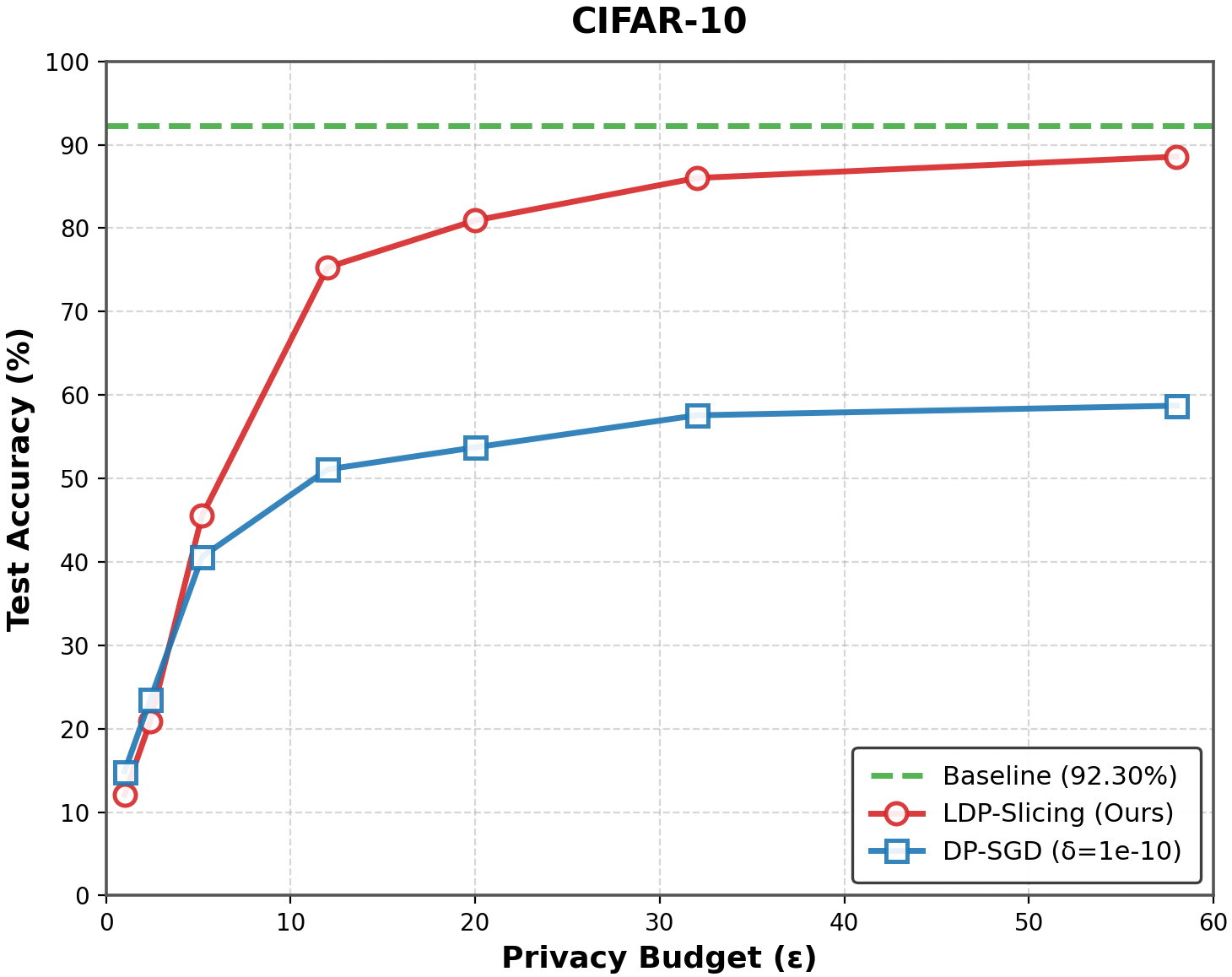}
    \caption{\textbf{Privacy-utility trade-off on CIFAR-10.}}
    \label{fig:cifar10_cls}
  \end{subfigure}
  \hfill
  \begin{subfigure}{0.45\textwidth}
    \centering
    \includegraphics[width=0.55\linewidth]{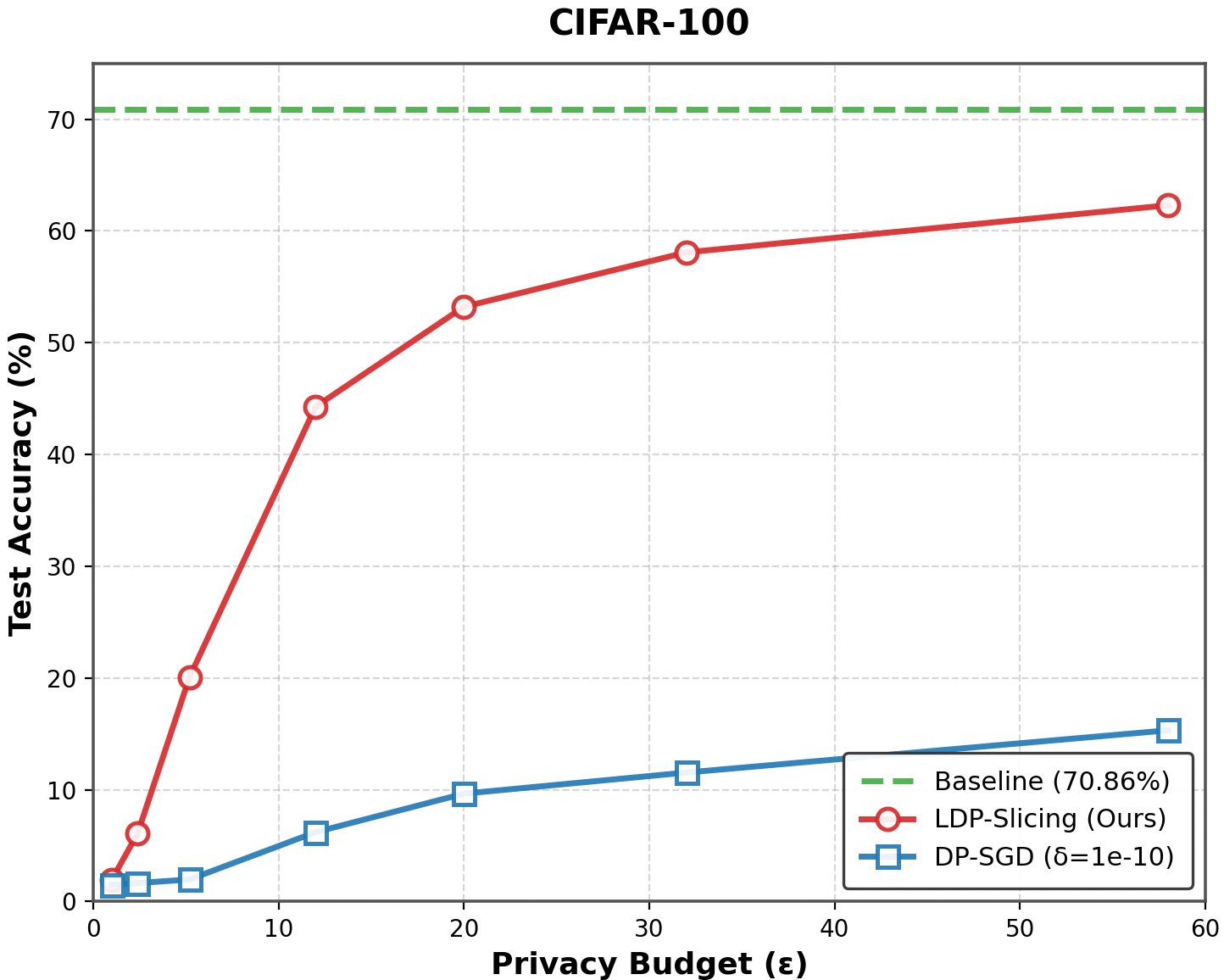}
    \caption{\textbf{Privacy-utility trade-off on CIFAR-100.}}
    \label{fig:cifar100_cls}
  \end{subfigure}
  \caption{\textbf{Privacy-utility trade-off on privacy-preserving image classification.} LDP-Slicing is compared against the strong centralized DP-SGD model \cite{Abadi_2016}. On both CIFAR-10 \textbf{(a)} and CIFAR-100 \textbf{(b)}, our local model demonstrates a clear and growing advantage as the budget moves into a more practical range, comparing with the centralized trusted-curator model.}
\end{figure*}
\subsection{Robustness against privacy attack}\label{sec:attack}
\noindent\textbf{White-box reconstruction attack.} We evaluate our method against a worst-case adversary who has full knowledge of our algorithm and trains a custom two-stage inversion model (denoiser + LL recovery network) to reverse our protection. First, a denoiser attempts to invert the LDP mechanism. It uses an autoencoder architecture \cite{bank2021autoencoders} with eight parallel attention modules \cite{vaswani2023attentionneed}, one for each bit-plane, to predict the original bits from the noisy input. Subsequently, a LL Recovery Network attempts to predict the pruned low-frequency LL information. We use a full-scale U-Net \cite{ronneberger2015u} to predict the LL band. As shown in \cref{fig:white_attack}, under medium to low privacy budget, the adversary fails to reconstruct a recognizable face. 
\begin{figure}[t]
  \centering
  \includegraphics[width=0.8\columnwidth]{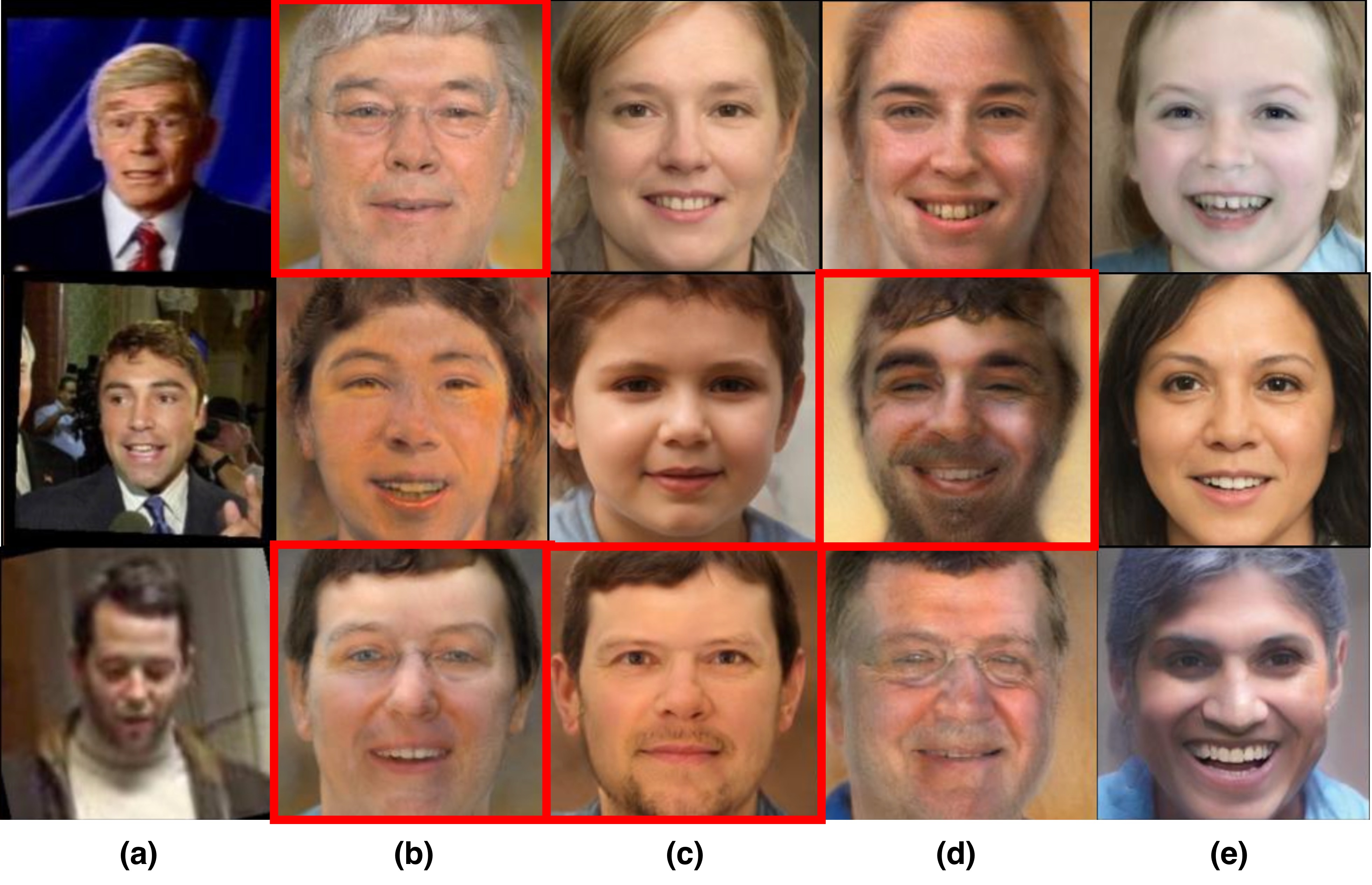}
  \caption{\textbf{Visual comparison on $\text{Map}^2$V \cite{zhang2024validating} reconstruction attack on PPFR SOTAs.} Red squares highlight reconstructions that leak facial features. \textbf{(a)} Target images. \textbf{(b)} Reconstruction from DCTDP \cite{dpdct}. \textbf{(c)} Reconstruction from DuetFace \cite{duetface}. \textbf{(d)} Reconstruction from PartialFace \cite{partialface}. \textbf{(e)} Reconstruction from LDP-Slicing (ours).}
  \label{fig:map2vattack}
\end{figure}

\noindent\textbf{Black-box reconstruction attack.} In the more realistic black-box setting, we assume an adversary who has no knowledge of our algorithm but can query it to create a original/privatized image pairs. The adversary then trains a powerful U-Net to reconstruct original images. As shown in \cref{fig:blk_attack}, the attack model reconstructs nearly identifiable faces from images protected by the competing DCTDP method. In contrast, even under a significantly relaxed privacy budget of $\varepsilon_{total}=58$, our LDP-Slicing images remain highly distorted. To simulate a truly SOTA adversary, we further evaluate against the powerful generative prior of a StyleGAN-based attack, $\text{Map}^2$V \cite{zhang2024validating}. As shown in \cref{fig:map2vattack}, while this SOTA attack successfully recovering some attributes from competing methods, it fails to break LDP-Slicing.

\noindent\textbf{Identity distinguishing attack.} Beyond visual reconstruction, we simulate an identity-distinguishing attack. Here, the adversary has a public unprotected photo of a person $I_B$, and wants to link a privatized image $\tilde{I_A}$ to that same person. This models a real-world attempt to track and link a user’s activity across different anonymized services. We train Wide-ResNet-28 as a classifier to determine if $\tilde{I_A}$ and $I_B$ are the same person (on BUPT dataset). LDP-Slicing consistently maintains a lower attack advantage than DCTDP across all datasets, as illustrated in \cref{tab:distin}.
\begin{table}[t]
\caption{\textbf{Identity distinguishing attack advantage (\%).} The table shows the adversary's advantage rate in \cref{def:identity-disthin}. A lower advantage indicates a stronger protection.}
\label{tab:distin}
\centering
\resizebox{\columnwidth}{!}{
\begin{tabular}{l c c c c c c}
\toprule
\textbf{Method} & \textbf{CIFAR10} & \textbf{CIFAR100} & \textbf{AgeDB-30} & \textbf{LFW} & \textbf{CALFW} & \textbf{CPLFW} \\
\midrule
PartialFace \cite{partialface} &3.55 & 4.34 & 0.43  &16.28& 9.0 & 6.18  \\ DuetFace \cite{duetface} & 5.07 & 7.14& 6.97& 19.75& 12.98 & 8.89 \\
DCTDP \cite{dpdct}   & 4.97 & 5.98 & 5.65 & 12.58 & 10.92 & 10.36\\
\textbf{LDP-Slicing (Ours)} & \textbf{0.25} & \textbf{0.1} & \textbf{0.42} & \textbf{4.5} & \textbf{3.87} & \textbf{1.62} \\
\bottomrule
\end{tabular}}
\end{table}
\subsection{Ablation study}
\begin{table}[!t]
  \centering
  \scriptsize
  \setlength{\tabcolsep}{2pt}
  \caption{
    \textbf{Ablation study of key components of LDP-Slicing}. Results show face recognition accuracy (\%) on four benchmarks, with a fixed privacy budget of $\varepsilon_{\text{total}}=20$. Each ablative variant modifies a single component of our full method (D):
    (A) without LL pruning, (B) with uniform privacy budget, and (C) DCT-based DC pruning with full utility-aware optimization.
  }
  \label{tab:abla}
  \resizebox{\columnwidth}{!}{%
    \begin{tabular}{lcccc}
      \toprule
      \textbf{Configuration} & \textbf{AgeDB-30} &
      \textbf{LFW} & \textbf{CPLFW} & \textbf{CALFW} \\
      \midrule
      ArcFace (baseline)                        & 97.88 & 99.77 & 92.77 & 96.05 \\
      (A) w/o LL Pruning                        & 96.90 & 99.77 & 91.67 & 96.05 \\
      (B) LL Pruning + Uniform Budget                      & 89.82 & 99.35 & 86.73 & 94.13 \\
      (C) DC Pruning + Utility-aware optimization & 95.58 & 99.53 & 89.53 & 95.77\\
      (D) LL Pruning + Utility-aware optimization (LDP-Slicing)     & 96.68 & 99.75 & 91.08 & 96.02 \\
      \bottomrule
    \end{tabular}
  }
\end{table}
\noindent\textbf{Effect of LL pruning.}
Comparing our full method (D) with a variant without LL pruning (A) shows the utility trade-off made for perceptual privacy. While LL pruning yields a minor accuracy gain, we show in \cref{fig:aba_dc} that removing it leaves images visually identifiable to human observers at large  $\varepsilon_{total}$. We also show in the supplementary that under moderate settings, LDP-Slicing remains robust even without LL pruning. Furthermore, comparing our DWT-based pruning (D) to a DCT-based low-frequency removal alternative (C) confirms that DWT preserves better utility.

\noindent\textbf{Effect of utility-aware budget optimization.} We compare our full, optimized budget allocation (C) against a naive uniform allocation (B). The uniform approach results in a catastrophic performance degradation across all benchmarks (e.g., a 6.86\% absolute drop on AgeDB-30). This proves our hypothesis of utility-aware optimizing the privacy budget based on the significance of each bit plane and color channel.

\noindent\textbf{Impact of the total privacy budget $\varepsilon_{total}$.}
Finally, we analyze the privacy-utility trade-off across a range of budgets. As shown in \cref{fig:aba_dc} and \cref{tab:var_bugdet}, our method exhibits a predictable trade-off: as the privacy guarantee becomes stronger (lower $\varepsilon$), utility declines. 
\begin{table}[!t]
  \centering
  \scriptsize
  \setlength{\tabcolsep}{2pt}
  \caption{\textbf{Privacy-utility trade-off on face recognition accuracy (\%)}. As the privacy budget $\varepsilon_{\text{total}}$ decreases, utility across all benchmarks declines smoothly, aligning with theoretical expectations. PSNR values also correlate with the applied privacy budget.}
  \label{tab:privacy_utility}
  \resizebox{\columnwidth}{!}{%
    \begin{tabular}{lcccccc}
      \toprule
      \textbf{Privacy Budget} & \textbf{AgeDB-30} &
      \textbf{LFW} & \textbf{CPLFW} & \textbf{CALFW} &
      $\boldsymbol{\mathrm{PSNR}_{\text{avg}}}$ \\ \midrule
      $\varepsilon_{\text{total}}=1$   & 50.95 & 52.88 & 51.45 & 52.31 & 6.90\,dB \\
      $\varepsilon_{\text{total}}=2.4$ & 54.12 & 53.62 & 51.85 & 53.83 & 6.91\,dB \\
      $\varepsilon_{\text{total}}=5.2$ & 68.89 & 89.31 & 71.30 & 79.15 & 6.94\,dB \\
      $\varepsilon_{\text{total}}=12$  & 93.70 & 99.30 & 88.70 & 95.33 & 7.30\,dB \\
      $\varepsilon_{\text{total}}=20$   & 96.68 & 99.75 & 91.08 & 96.02 & 7.62\,dB \\
      $\varepsilon_{\text{total}}=32$  & 96.46 & 99.77 & 91.93 & 96.04 & 8.51\,dB \\
      $\varepsilon_{\text{total}}=58$  & 97.45  & 99.77 & 92.07 & 96.05 & 9.94\,dB \\
      \bottomrule
      \label{tab:var_bugdet}
    \end{tabular}
  }
\end{table}
\begin{table}[t]
\caption{\textbf{Storage and transmission overhead analysis.} Overhead is reported as a multiple of the size of a standard image (our method, $\times 1$). Lower is better. LDP-Slicing introduces zero overhead.}
\label{tab:storage}
\centering
\small
\setlength{\tabcolsep}{6pt}
\renewcommand{\arraystretch}{1.2}
\resizebox{\columnwidth}{!}{
\begin{tabular}{l | c c c c c c c}
\toprule
\textbf{Method}  
& Ours 
& \cite{minusface} 
& \cite{advface} 
& \cite{partialface} 
& \cite{ppfr-fd} 
& \cite{duetface} 
& \cite{dpdct} \\
\midrule
\textbf{Storage} $\downarrow$
& $\times 1$ 
& $\times 1$
& $\times 5.3$ 
& $\times 9$
& $\times 36$
& $\times 54$
& $\times 63$ \\
\bottomrule
\end{tabular}
}
\end{table}
\subsection{Complexity and compatibility} 
\noindent\textbf{Time complexity.} On a consumer-grade Apple M4 chip, we test LDP-Slicing over one hundred $112\times112$ images and the average processing time is 5.5 ms (throughput of 232 images/s), comparing existing SOTA MinusFace \cite{minusface}, which takes 68 ms. 
The complexity scales linearly with the number of pixels $N$, with the overall time complexity $\Theta(N)$. This low computational overhead makes LDP-Slicing a practical solution for edge devices.  \\
\noindent\textbf{Storage and transmission overhead.} As shown in \cref{tab:storage}, LDP-Slicing incurs zero storage or transmission overhead, outputting a standard image. This is a vast improvement over prior works like DCTDP \cite{dpdct} and DuetFace \cite{duetface}, which require representations that are $\times 63$ and $\times 54$ larger.\\ 
\noindent\textbf{Compatibility and Downstream Invariance.} Because LDP-Slicing outputs a standard image, it is inherently invariant to downstream tasks and can directly replace unprotected images without architectural modifications. We validate this task-agnostic robustness by evaluating our MS1MV2-trained model zero-shot on cross-dataset benchmarks (VGGFace2 \cite{vggface2}, CelebA \cite{celebA}), and on a completely different domain with higher-resolution medical data (Chest X-Ray \cite{medical}). Across these diverse settings, LDP-Slicing consistently preserves high utility. Furthermore, by the post-processing property of LDP, any model trained on these images inherits the rigorous privacy guarantee. Full downstream quantitative results are detailed in the supplementary. 
\section{Conclusion}
This paper addressed the challenge of enabling source-level privacy for images without sacrificing utility. Our framework, LDP-Slicing, transforms pixels into a bit-plane representation where LDP is natively effective. Combined with perceptual obfuscation and utility-aware budget optimization, it guarantees rigorous, source-level privacy while preserving downstream task accuracy. By making LDP practical for standard vision pipelines, this work unlocks zero-trust deployment in sensitive domains like medical imaging, paving the way for future extensions into video streams.

\clearpage
\setcounter{page}{1}
\maketitlesupplementary
\setcounter{section}{0}
\renewcommand\thesection{\Alph{section}}
\renewcommand\thesubsection{\thesection.\arabic{subsection}}
\begin{figure*}[!t]
  \centering
    \includegraphics[width=\textwidth]{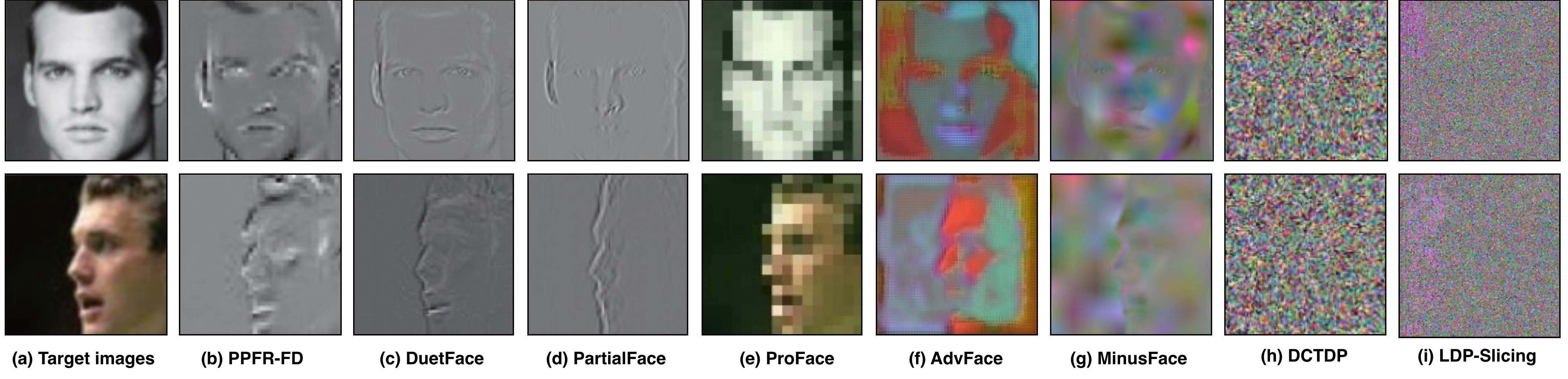}%
  \caption{\textbf{Qualitative comparison against other SOTAs.}  We compare LDP-Slicing (i) against (b) PPFR-FD \cite{ppfr-fd}, (c) DuetFace \cite{duetface}, (d) PartialFace \cite{partialface}, (e) ProFace \cite{proface}, (f) AdvFace \cite{advface}, (g) MinusFace \cite{minusface}, and DCTDP \cite{dpdct}. DCTDP and LDP-Slicing achieve a higher level of perceptual obfuscation.}
  \label{fig:Qualitativecmp}
\end{figure*}

This supplementary material provides additional details about the proposed LDP-Slicing method. Specifically, we provide:
\begin{itemize}
  \item Formal definitions of the reconstruction threat model (\cref{sec:thret}).
  \item Derivation of the utility-aware budget optimization. 
  \item Complete proofs for the pixel-level Local Differential Privacy (LDP) guarantee and Total Variation bound. 
  \item Training configurations and hyperparameters for all benchmarks.
  \item Additional Experiments. 
  \item Ethical discussion.
\end{itemize}

\section{Formal attack definition} \label{sec:attackdef}
We formalize the adversarial attack models discussed in our threat model in \cref{sec:thret}. 
\begin{definition}[Reconstruction attack~\cite{carlini2021}]
Let $S=\{(x_1,y_1),\dots,(x_n,y_n)\}$ be the sensitive dataset, and let $\tilde{S}=(\tilde{X},\tilde{Y}) \leftarrow \mathcal{M}(S)$ be the privatized dataset revealed to an adversary $\mathcal{A}$.
Given $\tilde{S}$, $\mathcal{A}$ outputs a candidate record $x^* \in \mathcal{X}$.
We say that $\mathcal{A}$ succeeds in a reconstruction attack if
\[
  \exists\, i \in [n] \text{ such that } d(x^*, x_i) \le r,
\]
where $d:\mathcal{X}\times\mathcal{X}\to\mathbb{R}_{\ge0}$ is a context-dependent metric
and $r>0$.
\end{definition}
In the context of face images, a successful reconstruction means that $x^*$ preserves identifiable features of the original person (\eg, eyes and nose). An ideal privacy mechanism should make it computationally infeasible for any adversary to produce such an $x^*$ from $\tilde{x}$ with high confidence. 

\section{Derivation of utility-aware budget optimization}
In \cref{sec:optmization} of the main paper, we formulate the following
constrained optimization problem:
\[
\begin{aligned}
& \underset{\{\varepsilon_{c,b}\}}{\text{minimize}}
& &  \sum_{c,b} \frac{W_{c,b}}{\varepsilon_{c,b}} \\
& \text{subject to}
& & \sum_{c,b} \varepsilon_{c,b} = \varepsilon_{\text{total}}, \qquad
    \varepsilon_{c,b} > 0,
\end{aligned}
\]
where $W_{c,b} > 0$ represents the importance weight for bit-plane $b$ of
channel $c$, and $\varepsilon_{c,b}$ is the allocated privacy budget.

We solve this using the method of Lagrange multipliers:
\[
\mathcal{L}(\{\varepsilon_{c,b}\}, \lambda)
=
\sum_{c,b} \frac{W_{c,b}}{\varepsilon_{c,b}}
+ \lambda \left( \sum_{c,b} \varepsilon_{c,b} - \varepsilon_{\text{total}} \right),
\]
where $\lambda \in \mathbb{R}$ is the Lagrange multiplier. Taking partial derivatives with respect to each decision variable
$\varepsilon_{c,b}$ and setting them to zero yields:
\[
\frac{\partial \mathcal{L}}{\partial \varepsilon_{c,b}}
= -\frac{W_{c,b}}{\varepsilon_{c,b}^2} + \lambda = 0.
\]
Solving for $\varepsilon_{c,b}$, we obtain:
\begin{equation}
\varepsilon_{c,b}
= \frac{\sqrt{W_{c,b}}}{\sqrt{\lambda}}.
\label{eq:eps_lambda}
\end{equation}
Since $W_{c,b} > 0$ and $\varepsilon_{c,b} > 0$, this implies $\lambda > 0$.
Substituting \eqref{eq:eps_lambda} into the budget constraint
$\sum_{c,b} \varepsilon_{c,b} = \varepsilon_{\text{total}}$, we obtain:
\[
\sum_{c,b} \varepsilon_{c,b}
=
\sum_{c,b} \frac{\sqrt{W_{c,b}}}{\sqrt{\lambda}}
= \varepsilon_{\text{total}},
\]
Solving for $1/\sqrt{\lambda}$:
\begin{equation}
\frac{1}{\sqrt{\lambda}}
= \frac{\varepsilon_{\text{total}}}{\sum_{c',b'} \sqrt{W_{c',b'}}}.
\label{eq:inv_sqrt_lambda}
\end{equation}
Finally, substituting \eqref{eq:inv_sqrt_lambda} back into
\eqref{eq:eps_lambda} gives the final solution
\[
\varepsilon_{c,b}
= \sqrt{W_{c,b}} \cdot
  \frac{\varepsilon_{\text{total}}}{\sum_{c',b'} \sqrt{W_{c',b'}}},
\]
or equivalently,
\begin{equation}
\boxed{
\varepsilon_{c,b} = 
\varepsilon_{\text{total}} \cdot 
\frac{
    \sqrt{W_{c,b}}
}{
    \sum_{i \in \{\mathrm{Y}, \mathrm{Cb}, \mathrm{Cr}\}}
    \sum_{j=1}^{8} \sqrt{W_{i,j}}
}
}.
\end{equation}
\section{Complete proofs}
\subsection{Proof of pixel-level LDP guarantee}
\label{sec:proof_pixel_ldp}
In Theorem~\ref{thm:dpslicing_pixel}, we claim that LDP-Slicing guarantees that each individual pixel in the output image satisfies $\varepsilon_{\text{total}}$-LDP with respect to the original pixel value.
Let $\mathcal{X}$ denote the domain of possible pixel values after LL pruning, and let
\[
  \mathcal{M} : \mathcal{X} \to \mathcal{Y}
\]
be the per-pixel LDP-Slicing mechanism.
By Definition~\ref{def:dp}, this means that for any two possible pixel values $x, \tilde{x}$ and any measurable subset of possible randomized outputs $S$:
\begin{equation}
  \Pr[\mathcal{M}(x) \in S]
  \;\leq\;
  e^{\varepsilon_{\text{total}}} \cdot
  \Pr[\mathcal{M}(\tilde{x}) \in S],
  \label{eq:pixel-ldp-ineq}
\end{equation}
where $\varepsilon_{\text{total}}$ is the total privacy budget allocated per pixel. In this section, we provide the complete formal proof.
\subsection*{Setup}

Each pixel after LL pruning has Y, Cb, and Cr channels, each with 8 bits. We index the resulting $d=24$ bits as:
\[
  x \longmapsto (x_{1},\dots,x_{d}) \in \{0,1\}^d.
\]
Let $\varepsilon_\ell$ denote the privacy budget allocated to the $\ell$-th bit, where the total budget is $\varepsilon_{\text{total}} = \sum_{\ell=1}^{d} \varepsilon_\ell$.
\begin{proof}
\noindent\textbf{1. Single bit privacy.} \\
For any bit \(x_\ell\) in bit-plane \(B_\ell\), we apply the binary randomized response mechanism
  \(\RR\) with budget \(\varepsilon_\ell\).
  Let $p$ denote the probability of true response: 
  \[
    p
    \;=\;
    \frac{e^{\varepsilon_\ell}}{e^{\varepsilon_\ell}+1}
  \]
  By Definition~\ref{def:rr}, for any output \(\tilde{x}_\ell \in \{0,1\}\), we have:
  \begin{equation}
    \Prv{\RR(x_\ell) = \tilde{x}_\ell}
    =
    \begin{cases}
      p, & \text{if } x_\ell = \tilde{x}_\ell, \\[2pt] 
      1-p, & \text{if } x_\ell \neq \tilde{x}_\ell.
    \end{cases}
  \end{equation}
The worst-case likelihood ratio between two different inputs is then:
  \[
    \frac{\Prv{\RR(x_\ell) = \tilde{x}_\ell}}
         {\Prv{\RR(x_\ell) \neq \tilde{x}_\ell}}
    \;\le\;
    \frac{p}{1-p}
    \;=\;
    \frac{\displaystyle\frac{e^{\varepsilon_\ell}}{e^{\varepsilon_\ell}+1}}
         {\displaystyle\frac{1}{e^{\varepsilon_\ell}+1}}
    \;=\;
    e^{\varepsilon_\ell}.
  \]
  Thus, for every $\ell$ and for all $x_\ell,\tilde{x}_\ell \in \{0,1\}$ and all $S \subseteq \{0,1\}$:
 \begin{equation}\label{eq:perbit}
    \Prv{\RR(x_\ell) \in S}
    \;\le\;
    e^{\varepsilon_\ell} \Prv{\RR(\tilde{x}_\ell) \in S}
\end{equation}

\medskip
\noindent\textbf{2. Basic composition.} \\
For a fixed pixel $x \in \mathcal{X}$, let its $d$-bit representation after bit-plane
slicing be $(x_1,\dots,x_d) \in \{0,1\}^d$. For each bit-plane index $\ell \in \{1,\dots,d\}$, we apply the mechanism $\RR$ in parallel with budget $\varepsilon_\ell$:
\begin{equation}\label{eq:bsicmechan}
  \mathcal{M}_{1:d}(x)
  :=
  \bigl(\mathcal{M}_1(x_1),\dots,\mathcal{M}_d(x_d)\bigr)
  \in \{0,1\}^d.
\end{equation}
Where the mechanisms $\mathcal{M}_1,\dots,\mathcal{M}_d$ are run with mutually
independent on each bit. For any output pixel $\tilde{\mathbf{x}} = (\tilde{x}_1,\dots,\tilde{x}_d) \in \{0,1\}^d$ we have:
\[
  \Prv{\mathcal{M}_{1:d}(x) = \tilde{{x}}}
  \;=\;
  \prod_{\ell=1}^{d}
    \Prv{\mathcal{M}_\ell(x_\ell) = \tilde{x}_\ell}.
\]
Fix any two pixel values $x,\tilde{x} \in \mathcal{X}$ we obtain:
\begin{align*}
  \frac{\Prv{\mathcal{M}_{1:d}(x) = \tilde{{x}}}}
       {\Prv{\mathcal{M}_{1:d}({x}) \neq\tilde{{x}}}}
  &=
  \frac{\prod_{\ell=1}^{d} \Prv{\mathcal{M}_\ell(x_\ell) = \tilde{x}_\ell}}
       {\prod_{\ell=1}^{d} \Prv{\mathcal{M}_\ell({x}_\ell) \neq \tilde{x}_\ell}}
  \\[4pt]
  &=
  \prod_{\ell=1}^{d}
    \frac{\Prv{\mathcal{M}_\ell(x_\ell) = \tilde{x}_\ell}}
         {\Prv{\mathcal{M}_\ell({x}_\ell) \neq \tilde{x}_\ell}}
  \\[4pt]
  &\le
  \prod_{\ell=1}^{d} e^{\varepsilon_\ell}
  \;=\;
  e^{\sum_{\ell=1}^{d} \varepsilon_\ell}
  \;=\;
  e^{\varepsilon_{\text{total}}}.
\end{align*}
Therefore $\mathcal{M}_{1:d}$ is $\varepsilon_{\text{total}}$-LDP.

\item \textbf{3. Immunity to post-processing.}\\
We now prove formally that the reconstruction step preserves the
$\varepsilon_{\text{total}}$-LDP guarantee. Let
\[
  \mathcal{M}_{1:d} : \mathcal{X} \to \{0,1\}^d
\]
denote $\varepsilon_{\text{total}}$-LDP the LDP mechanism from \cref{eq:bsicmechan}. The reconstruction map
$f : \{0,1\}^d \to \mathcal{Y}$ is deterministic and given by:
\[
  f(\tilde{x}_1,\dots,\tilde{x}_d)
  =
  \sum_{\ell=1}^{d} 2^{d-\ell} \tilde{x}_\ell.
\]
The per-pixel LDP-Slicing mechanism is therefore:
\[
  \mathcal{M}(x) = f\bigl(\mathcal{M}_{1:d}(x)\bigr).
\]
Fix any $x,x' \in \mathcal{X}$ and any measurable subset $S \subseteq \mathcal{Y}$. Define
\[
  T = \{ z \in \{0,1\}^d : f(z) \in S \}.
\]
Then
\[
  \Prv{f(\mathcal{M}_{1:d}(x)) \in S}
  =
  \Prv{\mathcal{M}_{1:d}(x) \in T}.
\]
Since $\mathcal{M}_{1:d}$ is $\varepsilon_{\text{total}}$-LDP, we have:
\[
  \Prv{\mathcal{M}_{1:d}(x) \in T}
  \;\le\;
  e^{\varepsilon_{\text{total}}}
  \Prv{\mathcal{M}_{1:d}(\tilde{x}) \in T}.
\]
Hence LDP-Slicing is per-pixel $\varepsilon_{\text{total}}$-LDP.
\end{proof}
\subsection{Proof of total variation bound under $\varepsilon$-LDP}

\begin{proof}
Let $P$ and $Q$ be the output distributions of the LDP-Slicing mechanism $\mathcal{M}$
on pixel $x$ and $\tilde{x}$. By $\varepsilon$-LDP
(Definition~\ref{def:dp}), for all measurable $S \subseteq \mathcal{Y}$,
\[
  P(S) \le e^{\varepsilon} Q(S)
\]
The total variation distance between distribution $P$ and $Q$ is:
\[
  \mathrm{TV}(P,Q)
  = \sup_{S \subseteq \mathcal{Y}} \lvert P(S) - Q(S) \rvert.
\]

The LDP inequalities imply that the likelihood ratio between $P$ and $Q$
is bounded:
\[
  e^{-\varepsilon}
  \;\le\;
  \frac{dP}{dQ}(y)
  \;\le\;
  e^{\varepsilon}
\]
then
\[
  \mathrm{TV}(P,Q)
  \;\le\;
  \frac{e^{\varepsilon}-1}{e^{\varepsilon}+1}
  = \tanh\!\bigl(\varepsilon/2\bigr),
\]
which proves \eqref{eq:tv-bound}.

For the identity distinguishing attack in
Definition~\ref{def:identity-disthin}, let $P$ and $Q$ be the
distributions of the observation conditioned on $b=1$ and $b=0$,
respectively. Any (possibly randomized) adversary corresponds to some
decision region $S$ in which it outputs $b'=1$.
Then
\[
  \Pr[b'=b]
  = \tfrac12 \bigl(1 + P(S) - Q(S)\bigr),
\]
so:
\[
  \left|\Pr[b'=b] - \tfrac12\right|
  = \tfrac12\,\lvert P(S) - Q(S)\rvert
  \;\le\;
  \tfrac12\,\mathrm{TV}(P,Q).
\]
Combining this with the bound on $\mathrm{TV}(P,Q)$ yields:
\[
  \Adv^{\mathrm{dist}}_{\mathcal{M}}
  \;\le\;
  \tfrac12\,\tanh\!\bigl(\varepsilon/2\bigr),
\]
\end{proof}

\section{Experiments}
\subsection{Experimental details}
This subsection provides detailed information about our experimental setup and hyperparameters. We conduct all experiments on NVIDIA H100 GPUs using PyTorch.

\noindent\textbf{Image pre-processing.}
For face recognition, we resize each facial image to 112×112 pixels to align with other SOTAs. For image classification, we apply augmentation like random crop and random horizontal flip and optional cutout. 

\noindent\textbf{Privacy-preserving face recognition.} We adopt the ResNet-50 architecture with the improved residual \cite{7780459} backbone. The network is optimized using the ArcFace loss function \cite{arcface} with a feature scale $s=64$ and angular margin $m=0.4$. We use SGD \cite{robbins1951stochastic} with a momentum of 0.9 and a weight decay of $5\times10^{-4}$. Training proceeds for 24 epochs with an initial learning rate of 0.1, decayed by a factor of 10 at epochs 10, 18, and 22. We also initialize weights from a pre-trained IR-50 model and apply a 5-epoch warm-up period.

\noindent\textbf{Privacy-preserving image classification.}
For image classification on CIFAR-10 and CIFAR-100 dataset, we adopt a ResNet-56 architecture trained for 250 epochs with a batch size of 128. We use SGD with a momentum of 0.9 and a reduced weight decay of
$1\times10^{-4}$. We set the initial learning rate to 0.1 and decay by a factor of 10 at epochs 80, 160, and 200. For training with low privacy budgets (\eg, $\varepsilon=1$), we apply the gradient clipping with a max norm of 0.5. 
\subsection{Additional experiments}
We extend LDP-Slicing to the medical domain using the Chest X-Ray dataset \cite{medical} at $224\times224$ resolution. We adopt ResNet-50 as backbone and remove color weights during the budget optimization process. As shown in \cref{tab:privacy_utility_rebuttal}, LDP-Slicing retains high utility even under strict privacy budgets (\eg $\varepsilon_{total}=5.2$). Furthermore, we perform a zero-shot test of our MS1MV2 trained model directly on the VGGFace2 \cite{vggface2} and CelebA \cite{celebA} datasets. LDP-Slicing maintains a competitive utility under moderate privacy settings (in \cref{tab:privacy_utility_rebuttal}).
\subsection{Ablation on different color weights}
 We evaluate LDP-Slicing under 2 additional different weights (in \cref{tab:jpg_weight_rebuttal}). The original weight (4:1:1) consistently outperforms other weights.

\subsection{Ablation on LL pruning}
We perform an ablation study (in \cref{fig:blacbox}) by removing the perceptual obfuscation stage. Without LL pruning, LDP-Slicing can still suppress identity-defining high-frequency cues, and the inversion recovers at most blurred, non-identifying content. Although perceptual obfuscation is an auxiliary step, it allows the same $\varepsilon$ noise to be spent more on task-relevant signals. 
\begin{table}[!t]
  \centering
  \caption{\textbf{Privacy-utility trade-off (\%).} We perform additional experiment on another 3 benchmarks}
  \label{tab:privacy_utility_rebuttal}
  \resizebox{\columnwidth}{!}{%
  \begin{tabular}{l|ccccccc}
    \toprule
    \textbf{Dataset} & $\varepsilon$=1 & $\varepsilon$=2.4 & $\varepsilon$=5.2 & $\varepsilon$=12 & $\varepsilon$=20 & $\varepsilon$=32 & $\varepsilon$=58 \\ \midrule
    \textbf{Chest X-Ray} \cite{medical} & 76.60 & 88.94 & 90.54 & 91.99 & 92.15 & 92.95 & 93.43 \\
    \textbf{VGGFace2} \cite{vggface2}   & 48.95 & 50.31 & 51.71 & 66.69 & 67.15 & 68.99 & 70.64 \\
    \textbf{CelebA} \cite{celebA}       & 49.88 & 50.97 & 50.84 & 75.90 & 80.44 & 82.68 & 84.89 \\
    \bottomrule
  \end{tabular}%
  }
 
\end{table}
\begin{table}[!t]
  \centering
  \caption{\textbf{Utility under different color weight (\%).}}
  \label{tab:jpg_weight_rebuttal}
  \resizebox{\columnwidth}{!}{%
  \begin{tabular}{l c c c c c c}
    \toprule
    \textbf{Color W.} & \textbf{CIFAR10} & \textbf{CIFAR100} & \textbf{AgeDB} & \textbf{LFW} & \textbf{CALFW} & \textbf{CPLFW} \\
    \midrule
    \textbf{$4:1:1$ (Ours)} & \textbf{80.36} & \textbf{53.55} & \textbf{96.68} & \textbf{99.75} & \textbf{96.02} & \textbf{91.08} \\
    $2:1:1$ & 78.44 & 50.63 & 95.85 & 99.63 & 95.68 & 90.87 \\
    $1:1:1$ & 74.92 & 45.16 & 94.68 & 99.47 & 94.17 & 89.85 \\
    \bottomrule
  \end{tabular}%
  }
\end{table}

\begin{figure}[!t]
  \centering
  \includegraphics[width=\columnwidth]{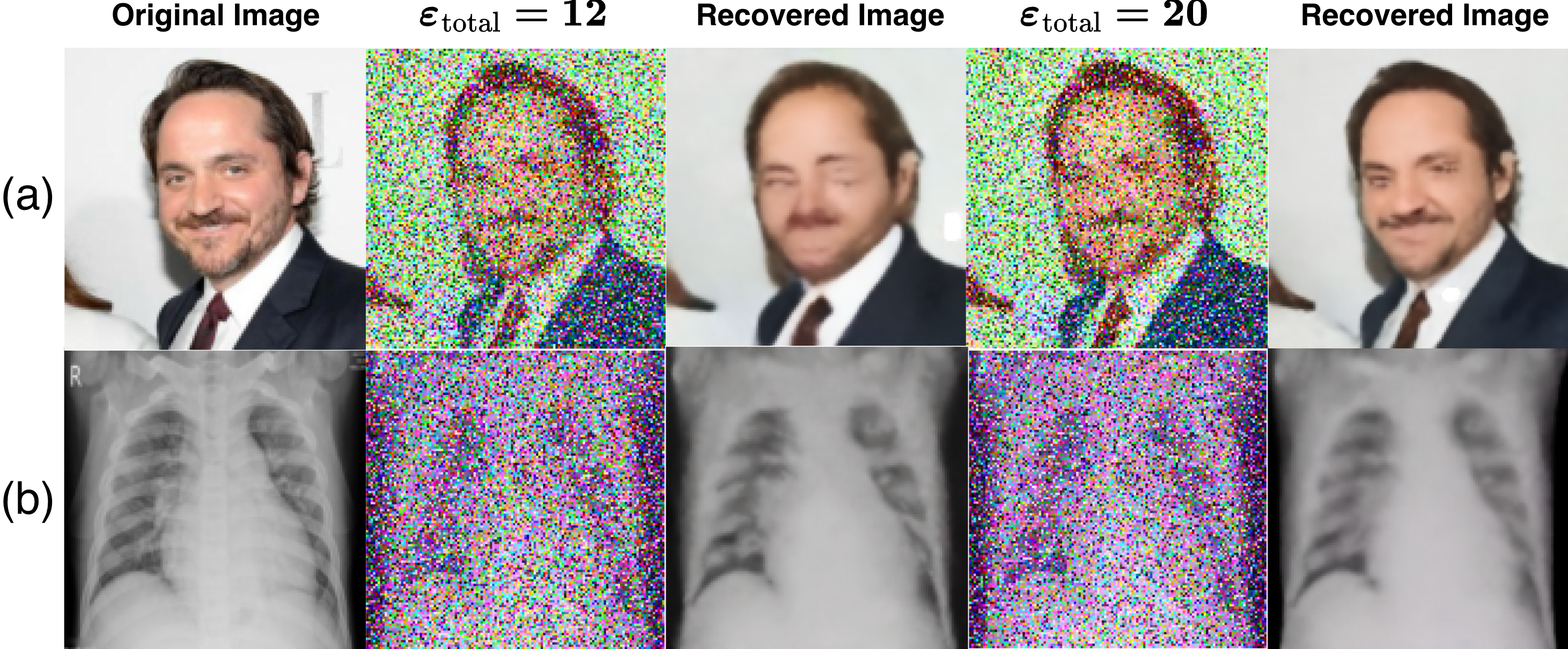}
  \caption{\textbf{Black-box reconstruction attack (w/o LL pruning) }}
  \label{fig:blacbox}
\end{figure}
\subsection{Comparison with block-level privacy} \label{sec:dct_comparison}
In the main paper, we show that LDP-Slicing outperforms DCTDP~\cite{dpdct} in face recognition benchmarks. To provide a fair comparison, we must convert the privacy guarantees at same level, as DCTDP defines privacy at the block-level, not pixel-level. Assume the standard configuration, DCTDP upsamples the facial image by 8 and
applies a Laplace mechanism with $\varepsilon_{\text{mean}}=0.5$ to each DCT coefficient within an $8\times8$ block. It also removes the DC coefficient for each color channel.  By the central Differential Private (DP) Sequential Composition Theorem~\cite{DworkMNS06}, the effective privacy budget bound for a single pixel is:
\[
\varepsilon_{\text{pixel}} \le ((8\times8)_{\text{block}} - 1_{\text{DC}})\times3 \times 0.5 = 94.5.
\]

In contrast, LDP-Slicing operates under a total budget of $\varepsilon_{\text{total}} = 20$. This derivation suggests that our method is approximately $4.7\times$ stricter than DCTDP. 

\subsection{Qualitative comparison against other SOTAs.}
To demonstrate the effectiveness against human observer, we add a visual comparison against other SOTAs in \cref{fig:Qualitativecmp}. Heuristic methods like PPFR-FD, DuetFace, and PartialFace retain sensitive facial features visible to the human observer. In contrast, DP/LDP methods conceal sensitive attributes in noise.

\section{Ethical considerations.}
Our primary experiment is trained on MS1Mv2 dataset \cite{guo2016msceleb1mdatasetbenchmarklargescale}. We acknowledge the ethical concerns about the MS1Mv2 dataset. Our usage is strictly limited to privacy defense research and adheres to CVPR’s ethical guidelines. We also recognize that our defense mechanism could theoretically be used by malicious actors to evade lawful systems (\eg, hiding illegal content). However, we believe the benefit of allowing end-users local control over data privacy outweighs these risks.  


\clearpage
{
    \small
    \bibliographystyle{ieeenat_fullname}
    \bibliography{main}
}


\end{document}